\documentclass[10pt,twocolumn,letterpaper]{article}

\usepackage[pagenumbers]{cvpr} 

\definecolor{cvprblue}{rgb}{0.21,0.49,0.74}
\usepackage[pagebackref,breaklinks,colorlinks,allcolors=cvprblue]{hyperref}

\usepackage{url}
\usepackage{booktabs} 
\usepackage{tabularray}
\usepackage{amsthm,amsmath,amssymb,algpseudocode,algorithm,stmaryrd}
\usepackage{wrapfig}
\definecolor{forestgreen}{RGB}{12, 200, 12}%
\definecolor{forestred}{RGB}{202,12,22}%
\definecolor{darkgreen}{RGB}{51, 153, 102}

\usepackage{adjustbox}
\usepackage{subcaption} 
\usepackage{multirow}
\usepackage{newtxtext, newtxmath} 
\usepackage{enumitem}
\newtheorem{assumption}{Assumption}
\newtheorem{lemma}{Lemma}
\newtheorem{theorem}{Theorem}

\def\paperID{30961} 
\def\confName{CVPR}
\def\confYear{2026}

\title{A Faster Path to Continual Learning}

\author{
    Wei Li$^{1}$ \quad
    Hangjie Yuan$^{2}$ \quad
    Zixiang Zhao$^{3}$ \quad
    Borui Kang$^{4}$ \quad
    Ziwei Liu$^{1,5}$ \quad 
    Tao Feng$^{6}$\thanks{Corresponding Author: Tao Feng (fengtao.hi@gmail.com)} \\
    $^1$College of Computer Science, Sichuan University, China \\
    $^2$College of Computer Science and Technology, Zhejiang University, China\\
    $^3$Photogrammetry and Remote Sensing Lab, ETH Z\"urich, Switzerland \\ 
    $^4$School of Computer Science, Nanjing University, China \\
    $^5$College of Computing and Data Science, Nanyang Technological University, Singapore \\
    $^6$Department of Computer Science and Technology, Tsinghua University, China \\
    {\tt\small \{ymjiii98, fengtao.hi\}@gmail.com, hj.yuan@zju.edu.cn} 
}

\begin{document}
\maketitle
\begin{abstract}
Continual Learning (CL) aims to train neural networks on a dynamic stream of tasks without forgetting previously learned knowledge. Among optimization-based approaches, C-Flat has emerged as a promising solution due to its plug-and-play nature and its ability to encourage uniformly low-loss regions for both new and old tasks. However, C-Flat requires three additional gradient computations per iteration, imposing substantial overhead on the optimization process.
In this work, we propose C-Flat Turbo, a faster yet stronger optimizer that significantly reduces the training cost. We show that the gradients associated with first-order flatness contain direction-invariant components relative to the proxy-model gradients, enabling us to skip redundant gradient computations in the perturbed ascent steps. Moreover, we observe that these flatness-promoting gradients progressively stabilize across tasks, which motivates a linear scheduling strategy with an adaptive trigger to allocate larger turbo steps for later tasks.
Experiments show that C-Flat Turbo is 1.0× to 1.25× faster than C-Flat across a wide range of CL methods, while achieving comparable or even improved accuracy.
\end{abstract}
    
\section{Introduction}
\label{sec:intro}

In the open world, learning systems are expected to absorb new knowledge incrementally, a process known as continual learning (CL)~\cite{zhou2023class, wang2024comprehensive, qiu2025train}. A major obstacle in CL is catastrophic forgetting. To address this challenge, various approaches have been proposed~\cite{zhou2024continual, wang2022learning, smith2023coda}, including memory-based, regularization-based, expansion-based, and pre-trained model (PTM)-based methods~\cite{feng2025zeroflow,hadsell2020embracing,liu2026continual}. Among them, PTM-based approaches~\cite{jung2023generating, tang2023prompt, khan2023introducing} have demonstrated particularly strong resistance to forgetting due to their superior generalization capabilities.

Beyond architectural and data-level strategies, the optimization process of CL has recently received increasing attention~\cite{jia2022visual, tang2023prompt, khan2023introducing, khattak2023self}. A line of work shows that sharpness-aware minimization serves as a powerful mechanism for improving generalization and mitigating forgetting~\cite{he2019asymmetric, foret2020sharpness, zhong2022improving, zhuang2022surrogate}. For instance, Mehta et al.~\cite{DBLP:journals/jmlr/MehtaPCS23} provide theoretical and empirical evidence that optimizing for flat regions helps reduce forgetting. Following this insight, C-Flat~\cite{bian2024make} was recently proposed as a CL-friendly optimizer that improves stability across the joint knowledge space of new and old tasks~\cite{deng2021flattening, shi2021overcoming}. By explicitly seeking uniformly flat minima, C-Flat effectively alleviates forgetting caused by distribution and task shifts~\cite{shi2021overcoming, kong2023overcoming, zhuang2022surrogate}.
However, the flatness alignment mechanism of C-Flat incurs significant computational overhead. (i) Its zeroth-order sharpness perturbation requires an additional backward pass, doubling the cost~\cite{zhong2022improving, zhang2023gradient, liu2022towards}. (ii) Its first-order flatness term, based on Gradient Norm Aware Minimization (GAM)~\cite{zhang2023gradient}, involves calculating gradient norms at both a proxy model and its perturbed state, leading to two extra backward passes~\cite{zhuang2022surrogate}. 

In this work, we observe that the orthogonal component of the first-order flatness gradient changes more gradually than both the empirical gradient and the zeroth-order sharpness term, similar to the behavior observed in LookSAM~\cite{liu2022towards}, which focuses only on zeroth-order sharpness. This stability inspires us to periodically skip redundant SAM and GAM computations and take shortcuts along these stable directions, thereby maintaining C-Flat's effectiveness while reducing computational cost. Moreover, we observe that both sharpness and flatness gradients decrease not only as training progresses within each task, but also across tasks, indicating an overall trend toward increased stability. Motivated by this observation, (i) we introduce a stage-wise turbo-step scheduler that flexibly adjusts the shortcut frequency, and (ii) we integrate an adaptive policy that dynamically combines C-Flat with SGD for further efficiency gains.

Our technical contributions are three-fold:

(i) We identify a direction-invariant component in the first-order flatness gradients and propose C-Flat Turbo, an efficient variant of C-Flat that selectively takes shortcuts along these stable directions to reach flatter regions with substantially lower cost.

(ii) We reveal a stabilization trend in sharpness- and flatness-aware gradients during continual learning, and based on this, introduce a stage-wise linear turbo-step scheduler together with an adaptive triggering mechanism that dynamically regulates when C-Flat regularization should be applied.

(iii) Through extensive experiments, we demonstrate that C-Flat Turbo consistently matches or surpasses the performance of state-of-the-art CL methods, while being up to 1.25× faster than standard C-Flat.

\section{Related Work}

\textbf{Continual learning.} Along this line, continual learning can be broadly categorized into three groups~\cite{wang2024comprehensive,hadsell2020embracing} as follows, \textit{(i) Memory-based methods} maintain a limited memory budget to resist forgetting~\cite{rebuffi2017icarl,NEURIPS2019_fa7cdfad,jeeveswaran2023birt,sun2023regularizing}. Many efforts selectively store a few representative exemplars for rehearsal during CL. Apart from direct replay, some other efforts~\cite{deng2021flattening,lin2022trgp,saha2020gradient,lin2023pcr,sun2023decoupling} use proxies of previous knowledge as memory to overcome forgetting, such as gradient-space bases and representative prototypes. \textit{(ii) Regularization-based methods} are characterized by introducing favorable regularization terms to trade off new and old knowledge~\cite{DBLP:journals/pami/LiH18a,kirkpatrick2017overcoming,cha2020cpr}. Common practices include weight~\cite{rudner2022continual,kim2022warping,akyurek2021subspace}, function~\cite{DBLP:journals/pami/LiH18a,oh2022alife}, and feature regularization~\cite{bhat2023task, gao2022r}, which encourage these spaces to remain close to their original states. Like natural cognitive systems, this consolidation helps preserve parameters or representations that are important for previous tasks. \textit{(iii) Expansion-based methods} aim to dynamically modularize network structures towards each task to tackle forgetting~\cite{zhou2022model}. Methods in this group construct task-specific parameters or architecture (e.g., parameter allocation~\cite{liu2021adaptive,abati2020conditional}, model decomposition/mixture, and modular network~\cite{DBLP:conf/cvpr/YanX021,zhu2022self}) to explicitly reduce inter-task interference~\cite{DBLP:conf/icml/SerraSMK18,hu2023dense,DBLP:conf/iclr/YoonKYH20}. In other words, these efforts shift the burden of storing raw data to the retention architecture~\cite{zhou2022model}.

\noindent\textbf{Continual learning using generalization.} The strong generalizability of PTMs further advances the CL~\cite{zhou2024continual}. Following ~\cite{zhou2024continual}, we categorize these studies into three groups as follows,
\textit{(i) Prompt-based methods} leverage prompt learning to enable lightweight updates to PTM~\cite{jia2022visual}. Many efforts~\cite{wang2022learning,smith2023coda,jung2023generating,tang2023prompt,feng2022overcoming} resort to various prompt selections, e.g., the way of prompt retrieval, task-special prompts, and prompt generation. Moreover, some work extends the concept of prompts to broader scenes~\cite{gan2023decorate,khattak2023self,khan2023introducing}, i.e., visual prompts, pre-trained vision-language model prompts, etc. Overall, these efforts strike a balance between the generalizability of PTM and the encoding of information from downstream tasks.
\textit{(ii) Representation-based methods} focus on building classifiers by leveraging the generalization capability of PTM~\cite{zhou2023revisiting}. Briefly, this line of work mainly focuses on calibrating classifiers and optimizing their internal relationships~\cite{zhou2024expandable,mcdonnell2024ranpac,zhang2023slca,zhu2021prototype}. For example, these methods concatenate backbone features to improve classifier representations, use random projections to remove class-wise correlations, and replay features to calibrate the classifier.
\textit{(iii) Model mixture-based methods} introduce a set of models and utilize various model-mixture approaches, such as model ensembling and model merging, to generate final outputs~\cite{wang2023isolation,gao2023unified,zhou2023learning,wang2022s}. In general, methods in this category integrate several PTMs with strong generalization capabilities to produce more robust results~\cite{zhou2024continual}. Yet, some of the drawbacks that arise are associated with these efforts, e.g. extra computational overhead and memory buffer.

\noindent\textbf{Better and more efficient generalization in continual learning.} In continual learning (CL), research on how flat minima~\cite{he2019asymmetric,foret2020sharpness,baldassi2020shaping} affect catastrophic forgetting is still in its early stages. A few studies have explored flatness evaluation during CL in specific scenarios~\cite{deng2021flattening,shi2021overcoming}, such as tuning parameters within flat minima regions. Notably, C-Flat~\cite{bian2024make}, a CL-tailored optimizer, introduces the concept of continual flatness to mitigate forgetting, opening a new avenue in CL research. While these efforts provide valuable insights into the role of flat minima in CL, they often face challenges in optimization efficiency, particularly due to the entrapment in random perturbations~\cite{liu2022towards,du2022sharpness,du2021efficient}, especially when dealing with successive tasks. Moreover, most sharpness-aware CL methods directly inherit the full computational cost of SAM updates, leading to multiple additional backward passes compared to vanilla optimizers. This creates a tension between pursuing flatter minima and maintaining practical training efficiency, which is particularly critical in long task sequences and large-scale PTM-based CL. In this paper, we present an efficient version of C-Flat that improves the performance of CL while maintaining optimization efficiency.

\section{Method}
\label{sec:method}

\subsection{Rethinking the Mechanism of C-Flat}
\label{subsec:C-Flat}

\begin{figure}
    \centering
    \includegraphics[width=0.45\textwidth]{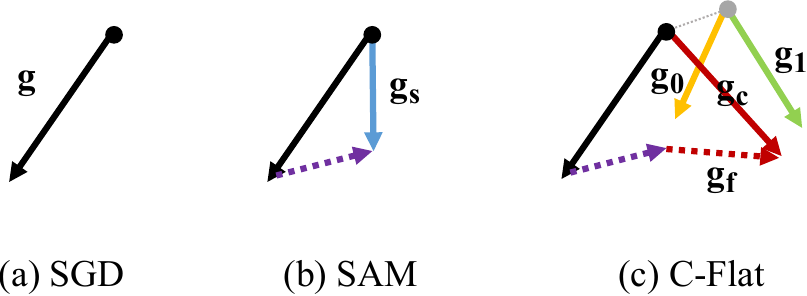}
    \caption{Brief illustration of C-Flat~\cite{bian2024make}. (a) SGD optimizes along the negative direction of the gradients, $g = \nabla L(f(\theta^T))$. (b) SAM~\cite{foret2020sharpness} computes the gradients $g_s$ at an adversarially perturbed position $\theta + \rho \cdot g/\|g\|$, and then updates the original model parameters. (c) C-Flat~\cite{bian2024make} further calculates the first-order flatness gradient $g_f$, based on the perturbed parameters $\theta + \rho \cdot (g_s - g)/\|(g_s - g)\|$.}
    \vspace{-2mm}
    \label{fig:cflat}
\end{figure}

Let $\mathcal{S}^t = \{(\mathbf{x}_i^t, \mathbf{y}_i^t)\}_{i=1}^{n^t}$ denote the training set with $n^t$ samples for task $t$, and let $\ell(f(\mathbf{x};\boldsymbol{\theta}),\mathbf{y})$ be the per-sample loss of the neural network $f$ with parameters $\boldsymbol{\theta}$ on a data point $(\mathbf{x},\mathbf{y})$. Continual Learning (CL) aims to learn a model $f$ with parameters $\boldsymbol{\theta} \in \mathbb{R}^d$ that minimizes the statistical risk across all tasks seen up to the current task $T$, under the constraint of limited or no access to previous data $\mathcal{S}^t$ for $t < T$. Specifically, CL methods optimize the model parameters $\boldsymbol{\theta}_T$ during training on task $T$ by minimizing the empirical loss over the available data: $\boldsymbol{\theta}_T = \arg\min_{\boldsymbol{\theta}} \mathcal{L}(\boldsymbol{\theta}, \mathcal{S}^T)$, where $\mathcal{L}(\boldsymbol{\theta}, \mathcal{S}^T) = \frac{1}{n^T} \sum_{i=1}^{n^T} \ell(f(\mathbf{x}_i^T;\boldsymbol{\theta}), \mathbf{y}_i^T)$. Depending on the specific CL methods, the training data may include a mixture of current task data $\mathcal{S}^T$ and additional data, such as stored exemplars or reconstructed samples from previous tasks. For simplicity, we omit $\mathcal{S}^T$ and denote $\boldsymbol{\theta}$ as $\boldsymbol{\theta}^T$ later.

As shown in Figure~\ref{fig:cflat}, the C-Flat optimizer~\cite{bian2024make} mitigates catastrophic forgetting by jointly optimizing for zeroth-order sharpness $\mathcal{R}_{\rho}^0(\boldsymbol{\theta})$ and first-order flatness $\mathcal{R}_{\rho}^1(\boldsymbol{\theta})$, encouraging the model to converge to parameter regions with uniformly low loss and curvature. The optimization objective on task $T$ can be written as
\begin{equation}
\label{eq:total}
\min_{\boldsymbol{\theta}} \; \mathcal{L}(\boldsymbol{\theta})
+ \mathcal{R}_{\rho}^0(\boldsymbol{\theta})
+ \lambda \cdot \mathcal{R}_{\rho}^1(\boldsymbol{\theta}),
\end{equation}
where $\lambda > 0$ is a balancing hyperparameter and $\rho > 0$ defines the neighborhood radius around the current parameter $\boldsymbol{\theta}$. Specifically, the zeroth-order term is defined as
\begin{equation}
    \mathcal{R}_{\rho}^0(\boldsymbol{\theta})
    = \max_{\|\boldsymbol{\epsilon}_0\| \leq \rho}
      \mathcal{L}(\boldsymbol{\theta} + \boldsymbol{\epsilon}_0)
      - \mathcal{L}(\boldsymbol{\theta}),
\end{equation}
which encourages uniformly low loss within a local neighborhood of $\boldsymbol{\theta}$. The first-order flatness term is given by
\begin{equation}
\mathcal{R}_{\rho}^1(\boldsymbol{\theta})
= \rho \cdot \max_{\|\boldsymbol{\epsilon}_1\| \leq \rho}
\Big\| \nabla \mathcal{L}(\boldsymbol{\theta} + \boldsymbol{\epsilon}_1) \Big\|,
\end{equation}
which encourages low curvature of the loss landscape. Here, $\|\cdot\|$ denotes the $\ell_2$ norm.

\noindent\textbf{Propagation flow.} 
Assuming that the loss function $\mathcal{L}$ is differentiable and bounded, and following the derivation of $\nabla R_{\rho}^0(\theta)$ in~\cite{foret2020sharpness, singh2020woodfisher, zhang2023gradient}, we can approximate $\nabla R_{\rho}^1(\theta)$ as follows. More notations are provided in Appendix~\ref{subsec:derivation}.

\begin{equation}
\label{eq:grad_s}
\nabla \mathcal{R}_{\rho}^0(\boldsymbol{\theta}) 
= \nabla \mathcal{L}(\boldsymbol{\theta} + \boldsymbol{\epsilon}_0^*) - \nabla \mathcal{L}(\theta),
\quad 
\boldsymbol{\epsilon}_0^* \approx \rho \cdot 
\frac{\nabla \mathcal{L}(\boldsymbol{\theta})}
{\big\| \nabla \mathcal{L}(\boldsymbol{\theta}) \big\|}.
\end{equation}

{\setlength{\belowdisplayskip}{6pt}
\begin{align}
\label{eq:grad_f}
    \nabla \mathcal{R}^1_{\rho}\!\left(\boldsymbol{\theta}\right) 
    &\approx \rho \cdot \nabla 
        \left\| \nabla \mathcal{L}\!\left(\boldsymbol{\theta}+\boldsymbol{\epsilon}_1^*\right) \right\| \\ \nonumber
    &\approx \nabla \mathcal{L}\!\left(
        \boldsymbol{\theta}+\boldsymbol{\epsilon}_1^* 
        + \rho \cdot \frac{\nabla \mathcal{L}(\boldsymbol{\theta}+\boldsymbol{\epsilon}_1^*)}
                {\big\| \nabla \mathcal{L}(\boldsymbol{\theta}+\boldsymbol{\epsilon}_1^*) \big\|}\right) 
    - \nabla \mathcal{L}\!\left(\boldsymbol{\theta}+\boldsymbol{\epsilon}_1^*\right), \\ \nonumber
    \boldsymbol{\epsilon}_1^* &\approx \rho \cdot (\boldsymbol{g_s} - \boldsymbol{g})/(\| \boldsymbol{g_s} - \boldsymbol{g} \|).
\vspace{-6mm}
\end{align}
}

\noindent\textbf{Notations.}
Here, we define several variations and terminology that will frequently be used later. Detailed descriptions can be found in Appendix~\ref{sec:symbols}.
\begin{enumerate}[label={},leftmargin=0pt]
\item i) \textbf{the empirical loss term:} $\boldsymbol{g} = \nabla \mathcal{L}(\boldsymbol{\theta})$ as the original gradients derived from the vanilla optimizer.

\item ii) \textbf{the zeroth-order sharpness term:} $ \boldsymbol{g}_s 
= \nabla \mathcal{L}(\boldsymbol{\theta} + \boldsymbol{\epsilon}_0^*)$ as the perturbed SAM gradients, where $\boldsymbol{g}_s - \boldsymbol{g} = \nabla \mathcal{R}_{\rho}^0(\boldsymbol{\theta}) $ measures the sharpness of the loss landscape.\footnote{We use $\boldsymbol{g}_s - \boldsymbol{g}$ rather than $\boldsymbol{g}_{vs}$ to measure sharpness, for better alignment with the flatness term $\boldsymbol{g}_f = \boldsymbol{g}_1 - \boldsymbol{g}_0$ used later. Though it is not fully orthogonal to the gradient direction $\boldsymbol{g}$, it still captures the direction that promotes zeroth-order sharpness, and has been widely used in~\cite{zhao2022penalizing, zhang2023gradient} when combined with the vanilla gradient $\boldsymbol{g}$ to form a sharpness-aware update direction.}

\item iii) \textbf{the first-order flatness term:} 
$\boldsymbol{g}_f = \nabla \mathcal{R}^1_{\rho}(\boldsymbol{\theta}) = \boldsymbol{g}_1 - \boldsymbol{g}_0$ as the regularization gradients, which quantifies the flatness of the loss landscape. Here, the intermediate gradients are given by $\boldsymbol{g}_0 = \nabla \mathcal{L}(\boldsymbol{\theta} + \boldsymbol{\epsilon}_1^*)$ and $\boldsymbol{g}_1 = \nabla \mathcal{L}\left(\boldsymbol{\theta} + \boldsymbol{\epsilon}_1^* + \rho \cdot \frac{\nabla \mathcal{L}(\boldsymbol{\theta} + \boldsymbol{\epsilon}_1^*)}{\left\| \nabla \mathcal{L}(\boldsymbol{\theta} + \boldsymbol{\epsilon}_1^*) \right\|} \right)$.
\end{enumerate}

As shown above, optimizing the gradients of the C-Flat objective can be decomposed into three components: \textbf{\textit{the empirical loss term $\boldsymbol{g}$, the sharpness term $\boldsymbol{g}{s}-\boldsymbol{g}$, and the flatness term $\boldsymbol{g}{f}$}}.
Among them, $\boldsymbol{g}$ is required at every update step to reduce the empirical risk, whereas computing $\boldsymbol{g}_s$ and $\boldsymbol{g}_f$ incurs one and two additional backward passes, respectively, using the perturbed models $\boldsymbol{\theta} + \boldsymbol{\epsilon}_0^*$ and $\boldsymbol{\theta} + \boldsymbol{\epsilon}_1^*$. Both regularization terms help identify flatter regions of the loss landscape. Despite their performance benefits, searching for such regions is computationally expensive and imposes a substantial burden on CL systems. Therefore, developing time-efficient solutions becomes imperative.

\begin{figure*}[]
    \centering
    \includegraphics[width=\textwidth]{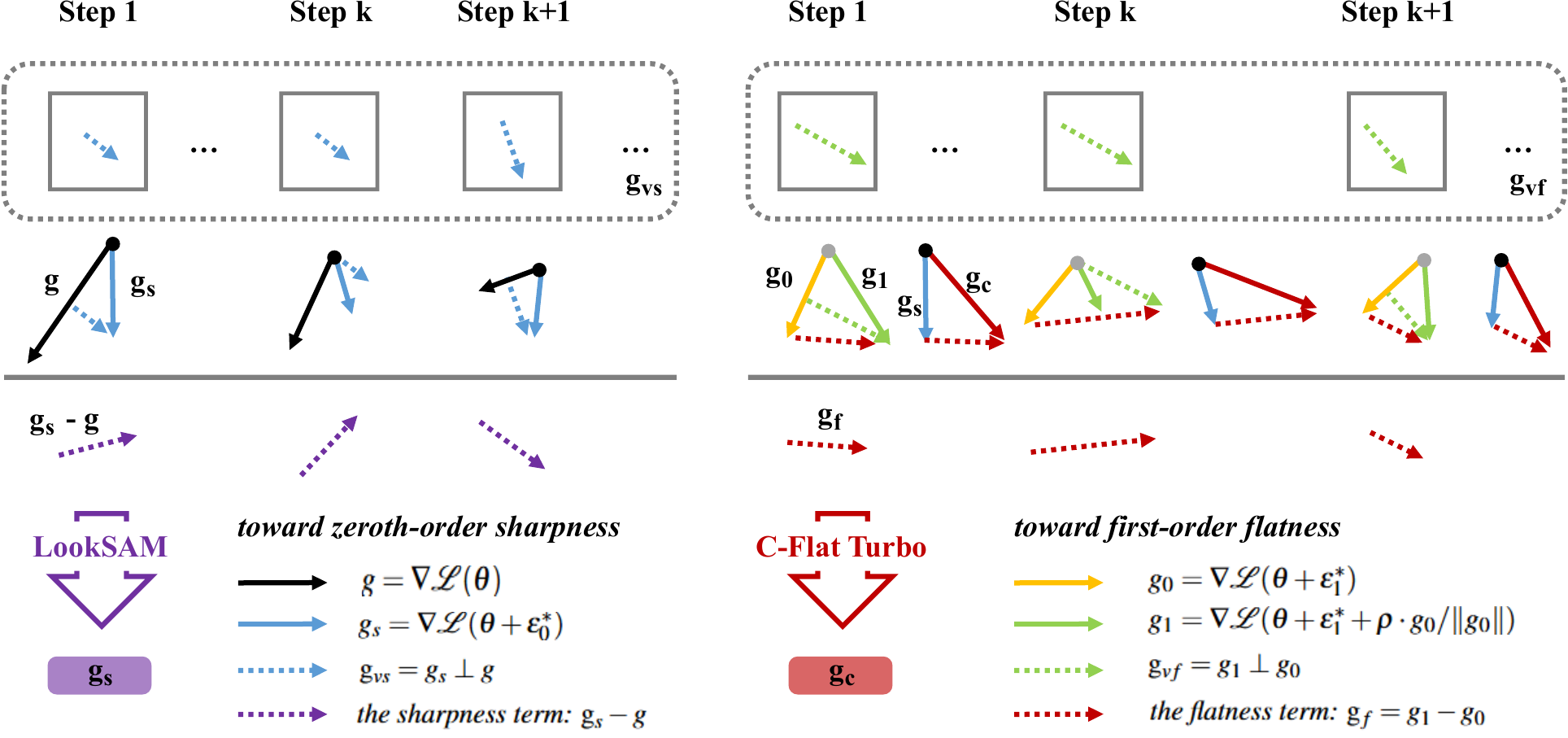}
    \caption{Schematic illustration of C-Flat Turbo. 
    \textbf{Left:} LookSAM~\cite{liu2022towards} decomposes the SAM gradients into two components: one parallel to $\boldsymbol{g}$ that reduces the empirical loss, and an orthogonal component $\boldsymbol{g}_{\mathrm{vs}}$ that guides convergence toward a common low-loss region. Empirical results show that $\boldsymbol{g}_{\mathrm{vs}}$ varies significantly more slowly than $\boldsymbol{g}$. 
    \textbf{Right:} C-Flat Turbo investigates the latent invariance of $\boldsymbol{g}_{\mathrm{vf}}$, the flatness component orthogonal to the gradients at the perturbed model $\boldsymbol{\theta} + \boldsymbol{\epsilon}_1^*$, which exhibits even slower variation than $\boldsymbol{g}_{\mathrm{vs}}$ in LookSAM.}
    \label{fig:framework}
\end{figure*}

\subsection{C-Flat Turbo}
In this section, we introduce C-Flat Turbo, an enhanced variant of C-Flat designed to accelerate training in continual learning scenarios. C-Flat Turbo exploits the observation that the orthogonal component of the first-order flatness gradient varies much more slowly than other gradient terms, allowing the optimizer to skip redundant computations associated with proxy and proxy-perturbed gradients. In addition, we incorporate an adaptive mechanism that monitors sharpness online and selectively applies C-Flat only when beneficial, falling back to vanilla optimizers otherwise. 

\subsubsection{Taking Shortcuts Toward Flatness}
\label{subsubsec:memorize}

Base optimizers like SGD and Adam reduce the loss function along gradient directions. However, the resulting model parameters are often sensitive to small perturbations or noise, which can lead to degradation of previously learned parameters when adapting to new tasks. Seeking a unified low-loss landscape within a local region around the global minima has proven effective for continual learning. C-Flat builds upon this idea by promoting flatter regions via first-order flatness, in conjunction with zeroth-order sharpness.

In the context of zeroth-order sharpness, the increment $\boldsymbol{g}_s - \boldsymbol{g}$ captures the sharpness-aware correction introduced by SAM. Following the algorithm in Algorithm~\ref{algo:opt}, we further define a direction-invariant sharpness component
\begin{equation}
\label{eq:gvs}
\boldsymbol{g}_{vs}
:= \boldsymbol{g}_s
 - \frac{\langle \boldsymbol{g}_s, \boldsymbol{g} \rangle}{\|\boldsymbol{g}\|^2}\,\boldsymbol{g},
\end{equation}
which is orthogonal to the primary gradient direction $\boldsymbol{g}$.
It has been observed that this component behaves as a more slowly varying, curvature-sensitive part of the SAM update, while facilitating the exploration of flatter regions in LookSAM~\cite{liu2022towards}. This phenomenon further motivates us to investigate whether similar direction-invariant components exist in the optimization of the first-order flatness term.

\begin{figure}[]
    \centering
    \begin{subfigure}{0.23\textwidth}
        \centering
        \includegraphics[width=\textwidth]{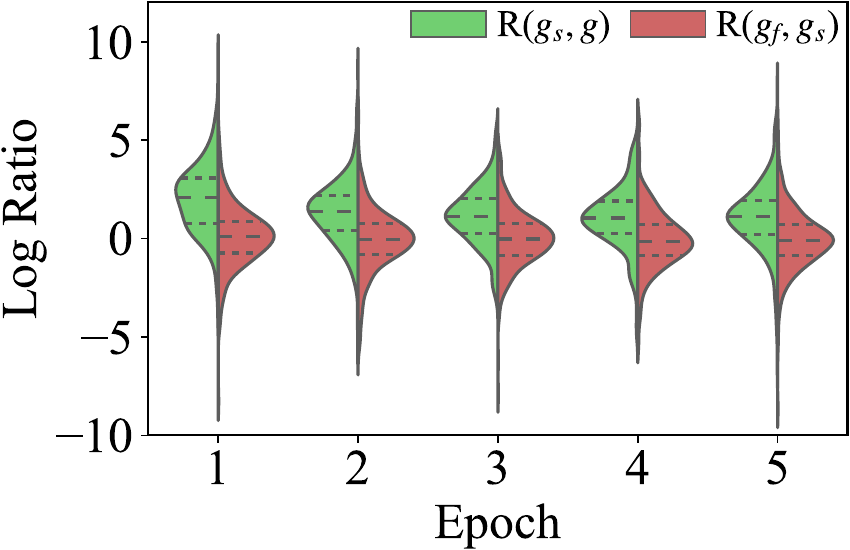}
        \caption{Gradient correction ratio.}
        \label{fig:ratio_dist}
    \end{subfigure}
    \hfil
    \begin{subfigure}{0.23\textwidth}
        \centering
        \includegraphics[width=\textwidth]{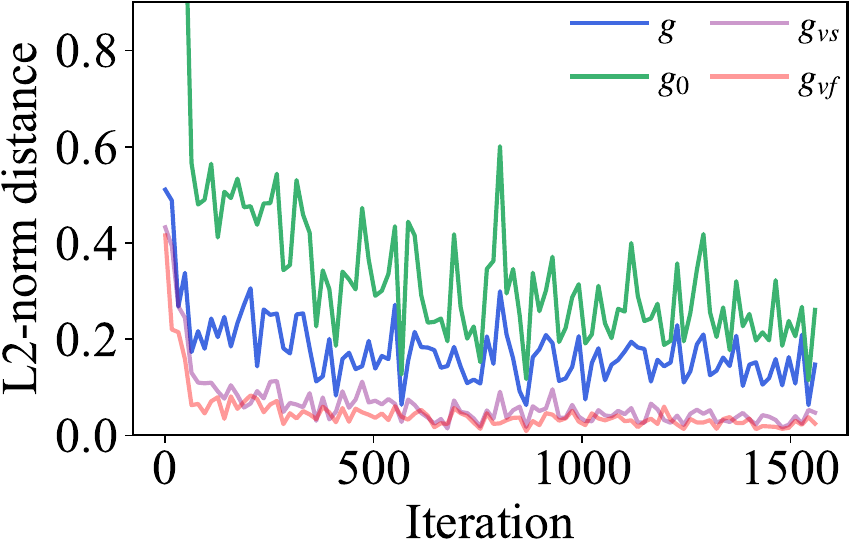}
        \caption{Training with PTM.}
        \label{fig:diff_ptm}
    \end{subfigure}
    \caption{\textbf{Left:} Distributions of gradient correction ratios for 
    $\boldsymbol{g}_s - \boldsymbol{g}$ and $\boldsymbol{g}_f$ across training epochs. 
    A larger portion of data near the distribution tails indicates increasingly pronounced differences. \textbf{Right:} L2-norm distances between gradients and their counterparts from five steps earlier. Changes along the sharpness-related direction ($\boldsymbol{g}_{vs}$) and flatness-related direction ($\boldsymbol{g}_{vf}$) evolve more slowly than those along the empirical gradient directions ($\boldsymbol{g}$ and $\boldsymbol{g}_0$).}
    \vspace{-4mm}
    \label{fig:grad_var}
\end{figure}

\begin{figure*}[t]
    \vspace{-2mm}
    \centering
    \begin{subfigure}{0.25\textwidth}
        \centering
        \includegraphics[width=\textwidth]{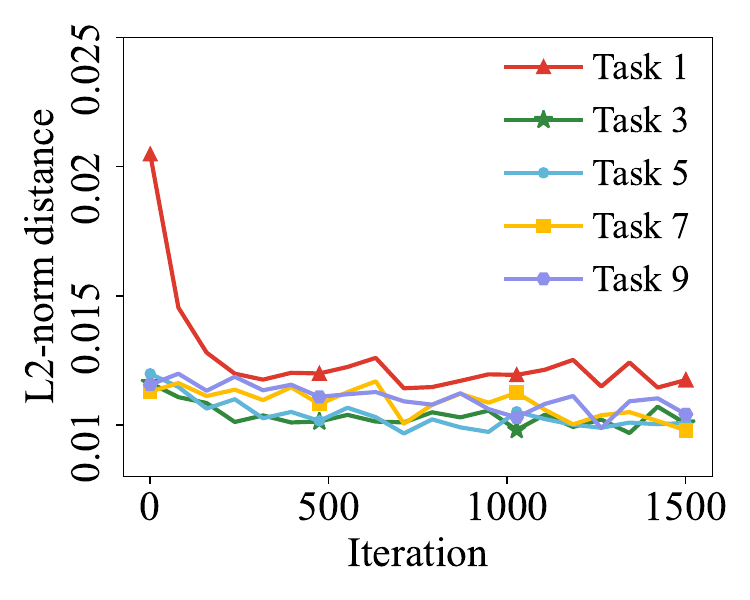}
        \caption{sharpness}
        \label{subfig:sharp_task}
    \end{subfigure}
    \hfill
    \begin{subfigure}{0.25\textwidth}
        \centering
        \includegraphics[width=\textwidth]{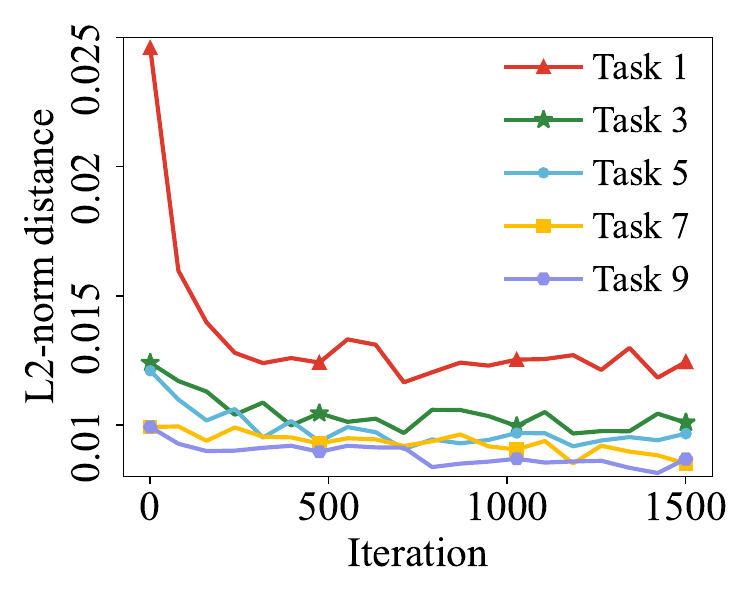}
        \caption{flatness}
        \label{subfig:flat_task}
    \end{subfigure}
    \hfill
    \begin{subfigure}{0.24\textwidth}
        \centering
        \includegraphics[width=\textwidth]{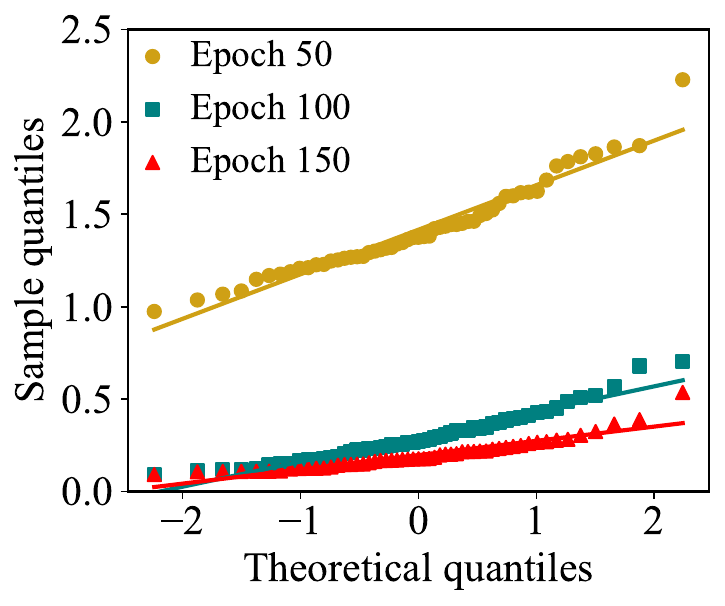}
        \caption{$\|\boldsymbol{g}\|^2$}
        \label{fig2:qq_s}
    \end{subfigure}
    \hfill
    \begin{subfigure}{0.24\textwidth}
        \centering
        \includegraphics[width=\textwidth]{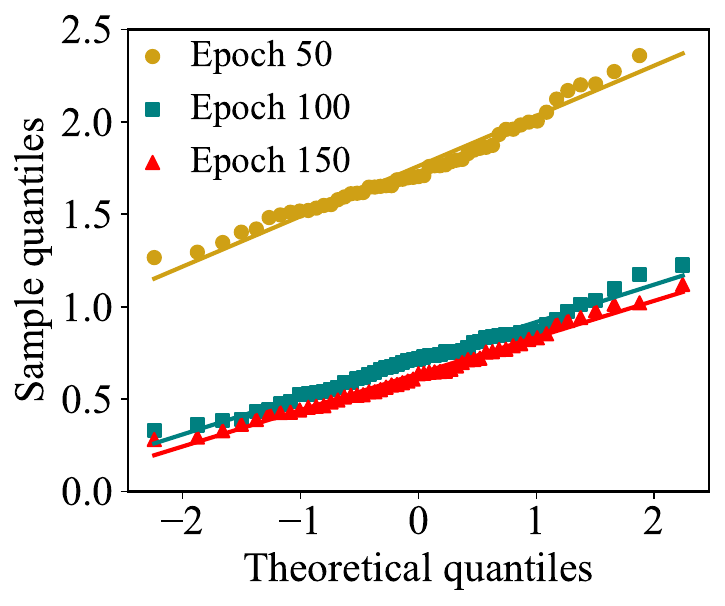}
        \caption{$\|\boldsymbol{g}_0\|^2$}
        \label{fig2:qq_f}
    \end{subfigure}
    \caption{
    \textbf{Left:} Gradient dynamics across tasks in CL. Both sharpness- and flatness-related gradients fluctuate substantially in early stages but progressively stabilize as training proceeds. 
    \textbf{Right:} Q–Q plots of $\|\boldsymbol{g}\|^2$ and $\|\boldsymbol{g}_0\|^2$, showing that both statistics gradually approach a normal distribution over the course of learning.
    }
    \vspace{-2mm}
    \label{fig:combined}
\end{figure*}

To visualize the optimization dynamics of $\boldsymbol{g}_s - \boldsymbol{g}$ and $\boldsymbol{g}_f$ relative to their reference directions, $\boldsymbol{g}$ for SAM and $\boldsymbol{g}_s$ for C-Flat, we define the gradient correction ratio as
$\text{ratio}(m, n) = \log\left(\left|\frac{m}{n + \epsilon}\right|\right), \quad \epsilon = 10^{-12}$, and then conduct experiments in EASE over 5 epochs.
Figure~\ref{fig:ratio_dist} illustrates the distributions of $(\boldsymbol{g}_s - \boldsymbol{g}, \boldsymbol{g})$ and $(\boldsymbol{g}_f, \boldsymbol{g}_s)$ in Task~1. It can be observed that $\boldsymbol{g}_s - \boldsymbol{g}$ exhibits heavier tails than $\boldsymbol{g}_f$ in the early stages, indicating stronger adjustment introduced by zeroth-order sharpness. As the optimization converges to a local minimum, sharpness and flatness jointly contribute to exploring flatter regions, with sharpness still playing a dominant role. This phenomenon suggests that $\boldsymbol{g}_f$ acts as a relatively small rectification upon SAM, even subtler than the correction $\boldsymbol{g}_s - \boldsymbol{g}$ induced by SAM itself.

In approximating $\mathcal{R}_{\rho}^1(\boldsymbol{\theta})$ in Eq.~\ref{eq:grad_f},
the perturbation $\boldsymbol{\epsilon}_1^*$ naturally emerges as a small step aligned with the sharpness term. Consequently, the point $\boldsymbol{\theta}_p = \boldsymbol{\theta} + \boldsymbol{\epsilon}_1^*$ serves as a proxy model for computing the flatness term.
As illustrated in Figure~\ref{fig:cflat}, $\boldsymbol{g}_f$ plays a role analogous to SAM, but is defined at the proxy model $\boldsymbol{\theta}_p$ located in a region of high curvature.
In this sense, we extract a direction-invariant flatness component
$\boldsymbol{g}_{vf}$ from $\boldsymbol{g}_f$ with respect to the proxy gradient $\boldsymbol{g}_0$:
\begin{equation}
\label{eq:gvf}
\boldsymbol{g}_{vf}
:= \boldsymbol{g}_f
 - \frac{\langle \boldsymbol{g}_f, \boldsymbol{g}_0 \rangle}{\|\boldsymbol{g}_0\|^2}\,\boldsymbol{g}_0,
\end{equation}
which is orthogonal to $\boldsymbol{g}_0$ and captures the flatness-promoting update direction that is invariant to the base gradient direction of the proxy model.

Figure~\ref{fig:diff_ptm} illustrates the gradient differences across iterations, measured by the L2-norm distance to the gradients from five steps earlier. We observe that $\boldsymbol{g}_{vs}$ fluctuates more slowly than $\boldsymbol{g}$, which is consistent with the findings in~\cite{liu2022towards}. Moreover, the proxy gradient $\boldsymbol{g}_0$ is noticeably more unstable than $\boldsymbol{g}$, reflecting its high-curvature nature and strong sensitivity to parameter changes. 
Interestingly, despite the considerable variation in $\boldsymbol{g}_0$, the flatness-oriented gradient $\boldsymbol{g}_f$ remains substantially more stable, and the direction-invariant component $\boldsymbol{g}_{vf}$ is even more stable than $\boldsymbol{g}_{vs}$, in line with the observations in Figure~\ref{fig:ratio_dist}.
These insights support extracting the orthogonal component $\boldsymbol{g}_{vf}$ from $\boldsymbol{g}_f$ with respect to $\boldsymbol{g}_0$ as a flatness-promoting direction, as illustrated in Figure~\ref{fig:framework}.
Consequently, for the subsequent $k-1$ surrogate steps, the cached $\boldsymbol{g}_{vf}$ can be reused to efficiently guide the search toward low-curvature directions based on the proxy model at $\boldsymbol{\theta} + \boldsymbol{\epsilon}_1^*$.
Concretely, we approximate the flatness update by adding a scaled direction-invariant term to the proxy gradient,
$\boldsymbol{g}_f \approx \boldsymbol{g}_0 + \beta \frac{\|\boldsymbol{g}_0\|}{\|\boldsymbol{g}_{vf}\|}\,\boldsymbol{g}_{vf}$,
thus avoiding recomputation of $\boldsymbol{g}_1$ when evaluating flatness at the proxy model and substantially reducing computational cost.

\subsubsection{Dynamic Control for C-Flat}
\label{subsubsec:ae-turbo}
\noindent\textbf{Stage-wise turbo step scheduling.} 
As continual learning progresses, the learned parameter space becomes increasingly flatter, with earlier classes becoming more distinctly separated. Figures~\ref{subfig:sharp_task} and~\ref{subfig:flat_task} depict the evolution of the sharpness and flatness terms over time. Both gradients fluctuate considerably during the initial incremental stage but stabilize in later stages, with this effect being particularly pronounced for flatness.
This observation motivates assigning smaller turbo steps to earlier tasks and larger intervals to later ones. To this end, we introduce a linear scheduler that adaptively adjusts the step size for computing sharpness- and flatness-aware gradients throughout training. It is simply formulated as Turbo-$k_0$ with $k_t = k_0 + 10 \cdot t / N$, where $k_0$ and $k_t$ denote the initial and task-$n$ step sizes, respectively, with a total of $N$ tasks. Empirically, C-Flat Turbo equipped with this scheduler achieves substantial speedup while maintaining competitive performance. Notably, an imprecise or even unknown value of $N$ does not pose a serious concern, since the scheduler mainly serves to gradually increase $k_t$, and small distortions of the schedule in later
tasks have little practical impact.

\noindent\textbf{Adaptive triggering of regularization.} 
Existing research on zeroth-order sharpness has proposed reducing computational overhead by combining SAM updates with standard ERM. For example, SS-SAM~\cite{zhao2022stochastic} employs a Bernoulli trial to decide whether to apply SAM, while AE-SAM~\cite{jiang2023adaptive} applies SAM only when a sharpness measure falls below a dynamically updated threshold. However, first-order flatness has received relatively little attention.
Figures~\ref{fig2:qq_s} and~\ref{fig2:qq_f} visualize the distributional properties of $\|\boldsymbol{g}\|^2$ and $\|\boldsymbol{g}_0\|^2$ using quantile–quantile (Q–Q) plots. Points that lie closer to the reference line indicate that the corresponding variable more closely follows a normal distribution.
Motivated by this observation, we use an exponential moving average (EMA) to estimate the mean and dispersion of $\|\boldsymbol{g}_0\|^2$, following~\cite{jiang2023adaptive}:
\begin{align}
\label{eq:ema_f}
\mu_{f,j} &= \delta \mu_{f, j-1} + (1 - \delta) \| \boldsymbol{g}_{0j} \|^2, \\ \nonumber
\sigma_{f,j} &= \delta \sigma_{f, j-1} + (1 - \delta) \bigl(\|\boldsymbol{g}_{0j}\|^2 - \mu_{f,j}\bigr)^2,
\end{align}
where $j$ denotes the current iteration and $\delta = 0.9$ is a decay factor that discounts outdated gradient values. The flatness regularization is triggered only when
\begin{equation}
\label{eq:trig}
    \|\boldsymbol{g}_{0j}\|^2 > \mu_{f,j} + \sigma_{f,j}.
\end{equation}
A detailed description of the overall optimization procedure is provided in Algorithm~\ref{algo:opt} in the Appendix.

\begin{table*}[h]
\caption{Accuracy and training speeds using five state-of-the-art methods, with a pre-trained ViT-B/16-IN1K backbone. \textcolor{forestred}{Red} and \textcolor{forestgreen}{green} denote the baseline and the efficient optimizer, respectively. \textbf{Bolded} indicates the best result.}
\label{tab:ptm}
\centering
\resizebox{.85\textwidth}{!}{
\begin{tabular}{c l cccc cccc c}
\toprule
\multirow{2}{*}{\centering Model} & \multirow{2}{*}{\centering Method} & \multicolumn{2}{c}{\centering CIFAR100 B0\_Inc10} & \multicolumn{2}{c}{\centering CUB B0\_Inc10} & \multicolumn{2}{c}{\centering IN-R B0\_Inc20} & \multicolumn{2}{c}{\centering ObjNet B0\_Inc10} & \multirow{2}{*}{\centering Img/s$\uparrow$} \\
& & Avg$\uparrow$ & Last$\uparrow$ & Avg$\uparrow$ & Last$\uparrow$ & Avg$\uparrow$ & Last$\uparrow$ & Avg$\uparrow$ & Last$\uparrow$ & \\
\midrule
\multirow{6}{*}{\centering \rotatebox{90}{\textit{Typical}}} 
& iCaRL~\cite{rebuffi2017icarl} & 77.83 & 66.64 & 82.91 & 74.00 & 72.13 & 61.62 & 48.06 & 28.20 & 73.35 (100\%) \\
& \textit{+C-Flat~\cite{bian2024make}} & 79.72 & 67.15 & 83.47 & 74.81 & 72.92 & 62.35 & 49.59 & 29.03 & 19.72 (\textcolor{forestred}{26.9\%}) \\
& \textit{+C-Flat Turbo} & \textbf{79.82} & \textbf{68.54} & \textbf{84.00} & \textbf{75.12} & \textbf{73.11} & \textbf{62.38} & \textbf{50.49} & \textbf{29.30} & 45.89 (\textcolor{forestgreen}{62.6\%}) \\
\cmidrule(lr){2-11}
& MEMO~\cite{zhou2022model} & 82.26 & 73.89 & 86.66 & 79.90 & 70.96 & 61.05 & 56.22 & 38.32 & 135.14 (100\%) \\
& \textit{+C-Flat~\cite{bian2024make}} & 82.61 & 75.49 & 87.03 & 80.73 & 71.69 & 62.93 & 56.50 & 39.45 & 46.11 (\textcolor{forestred}{34.1\%}) \\
& \textit{+C-Flat Turbo} & \textbf{83.02} & \textbf{75.76} & \textbf{87.15} & \textbf{80.92} & \textbf{72.38} & \textbf{63.55} & \textbf{57.52} & \textbf{40.62} & 91.91 (\textcolor{forestgreen}{68.0\%}) \\
\midrule
\multirow{9}{*}{\centering \rotatebox{90}{\textit{PTM-based}}}  
& L2P~\cite{wang2022learning} & 89.36 & 83.94 & 73.04 & 59.14 & 78.05 & 72.60 & 64.18 & 52.10 & 110.29 (100\%) \\
& \textit{+C-Flat~\cite{bian2024make}} & 89.56 & 84.35 & \textbf{74.36} & \textbf{62.11} & 78.67 & 73.78 & 64.53 & 52.47 & 28.63 (\textcolor{forestred}{30.0\%}) \\
& \textit{+C-Flat Turbo} & \textbf{89.78} & \textbf{84.69} & 74.12 & 61.97 & \textbf{78.86} & \textbf{73.82} & \textbf{64.64} & \textbf{52.55} & 65.50 (\textcolor{forestgreen}{59.4\%}) \\
\cmidrule(lr){2-11}
& Ranpac~\cite{mcdonnell2024ranpac} & 94.32 & 90.72 & 92.61 & 88.68 & 82.07 & 76.80 & 71.66 & 60.17 & 154.64 (100\%) \\
& \textit{+C-Flat~\cite{bian2024make}} & 94.41 & 90.70 & 92.67 & 88.76 & 82.66 & 77.25 & 72.15 & 60.33 & 42.98 (\textcolor{forestred}{27.8\%}) \\
& \textit{+C-Flat Turbo} & \textbf{94.45} & \textbf{90.74} & \textbf{93.12} & \textbf{89.02} & \textbf{83.13} & \textbf{77.83} & \textbf{72.16} & \textbf{60.33} & 94.34 (\textcolor{forestgreen}{61.0\%}) \\
\cmidrule(lr){2-11}
& EASE~\cite{zhou2024expandable} & 91.91 & 87.30 & 89.16 & 83.96 & 80.49 & 75.05 & 64.38 & 52.02 & 166.67 (100\%) \\
& \textit{+C-Flat~\cite{bian2024make}} & 92.05 & 87.91 & 89.37 & 84.05 & 80.97 & 75.64 & 64.89 & 52.47 & 44.25 (\textcolor{forestred}{26.5\%}) \\
& \textit{+C-Flat Turbo} & \textbf{92.36} & \textbf{87.96} & \textbf{89.56} & \textbf{84.18} & \textbf{81.18} & \textbf{75.76} & \textbf{64.96} & \textbf{52.61} & 102.74 (\textcolor{forestgreen}{61.6\%}) \\
\bottomrule
\end{tabular}
}
\end{table*}

\subsection{Convergence Analysis}
We analyze the convergence of Turbo in the $k-1$ steps where the optimizer uses surrogate gradients instead of recomputing the full C-Flat gradients. Since C-Flat itself has been proven to converge in~\cite{bian2024make,zhang2023gradient}, we only need to control the additional approximation error introduced by these surrogate steps. A detailed convergence proof of Turbo under this surrogate-gradient view is provided in Appendix~\ref{sec:turbo-conv-appendix}.

\section{Experiments}
\label{sec:experiments}

\subsection{Experiment Setup}
\label{subsec:setup}
\textbf{Datasets.} Following~\cite{zhou2024continual}, we perform the evaluation on CIFAR100~\cite{krizhevsky2009learning}, CUB200~\cite{wah2011caltech}, ImageNet-R~\cite{hendrycks2021many} (IN-R), and ObjectNet~\cite{barbu2019objectnet} (ObjNet).  
These datasets contain 100 classes in CIFAR100, 200 classes in CUB200, and cover ImageNet-R and ObjectNet, which exhibit large domain gaps relative to the pre-trained datasets (ImageNet~\cite{deng2009imagenet}).
Following~\cite{zhou2024continual, zhou2023class}, we denote the data split as ‘B-$m$ Inc-$n$’, meaning that the initial task contains $m$ classes, and each subsequent task contains $n$ classes. The random seed for class-order shuffling is fixed at 1993~\cite{zhou2023pycil, zhou2024continual}. 

\noindent\textbf{Baselines}. Typical CL and pre-trained model (PTM)-based CL methods~\cite{zhou2024continual} are used to assess C-Flat Turbo. For the former, we cover the classical iCaRL~\cite{rebuffi2017icarl} and MEMO~\cite{zhou2022model} methods. 
For the latter, we compare against L2P~\cite{wang2022learning}, Ranpac~\cite{mcdonnell2024ranpac}, and EASE~\cite{zhou2024expandable}, spanning common CL categories.

\noindent\textbf{Implementation details.} All experiments and compared methods are implemented and reproduced in PyTorch and PILOT~\cite{zhou2024continual, zhou2023pycil, bian2024make} on a single RTX 3090 GPU. Unless otherwise specified, all hyperparameters and configurations remain unchanged from the open-source repository~\cite{zhou2024continual}. 
To ensure a fair comparison, we evaluate all methods with the same model and vanilla SGD optimizer (Adam for L2P) and adopt ViT-B/16-IN1K as the representative pre-trained models~\cite{wang2022learning, zhou2023revisiting}. 
Implementation details can be found in \cref{sec:param}. For evaluation, we primarily present the results in terms of average accuracy (Avg), final task accuracy (Last), and the average running speed (Img/s).

\subsection{Faster and Stronger Performance}
\label{subsec:perform}

We thoroughly evaluate the performance of C-Flat Turbo. As shown in Table~\ref{tab:ptm}, although pre-trained models exhibit strong generalization capabilities, their feature spaces remain susceptible to contamination during continual adaptation to evolving data distributions, thereby exacerbating catastrophic forgetting. While C-Flat mitigates this degradation through flat region search mechanisms, it incurs significant computational overhead.
On the one hand, C-Flat Turbo addresses this limitation by reusing flatness-related shortcut directions extracted from earlier steps, without repeated computation. On the other hand, it flexibly combines C-Flat with base optimizers through a stage-wise step schedule and an adaptive trigger for regularization, further accelerating the training process. A more detailed per-task accuracy progression and ablation studies are provided in the Appendix~\ref{sec:per-task}. Experimental results demonstrate that C-Flat Turbo achieves better accuracy than C-Flat, while significantly reducing training time. Notably, we find that C-Flat Turbo remains stable even in PTM scenarios with larger generalization gaps (e.g., CUB200, ImageNet-R, and ObjectNet). C-Flat Turbo trains CL models at about $2\times$ the speed of C-Flat, and $0.6\times$ that of SGD.
Overall, whether applied to typical CL benchmarks or to scenarios with large domain gaps in PTM, C-Flat Turbo maintains strong performance while offering superior training efficiency compared to the baseline C-Flat optimizer.

\begin{figure*}[t]
    \centering
    \begin{subfigure}{0.25\linewidth}
        \centering
        \includegraphics[width=1.\textwidth]{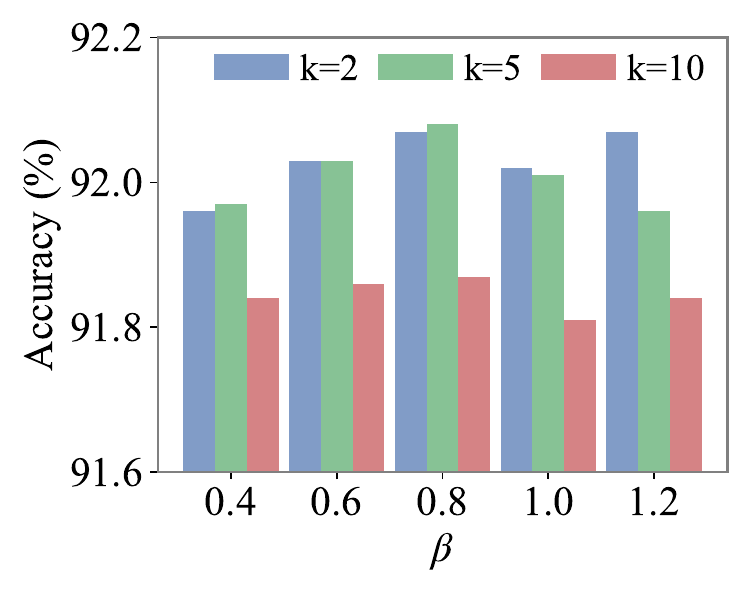}
        \caption{Sensitivity analysis}
        \label{subfig:params}
    \end{subfigure}\hfil
    \begin{subfigure}{0.24\linewidth}
        \centering
        \includegraphics[width=1.\textwidth]{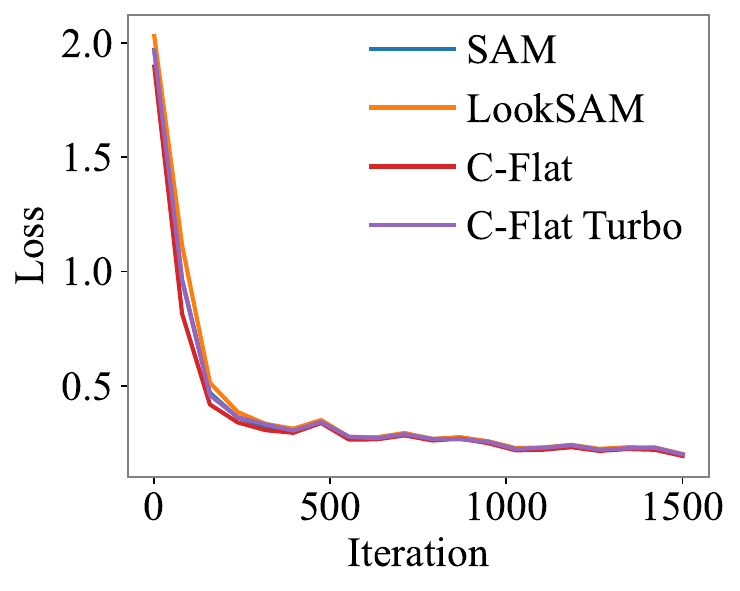}
        \caption{Variation on loss}
        \label{subfig:loss}
    \end{subfigure}\hfil
    \begin{subfigure}{0.245\linewidth}
        \centering
        \includegraphics[width=1.\textwidth]{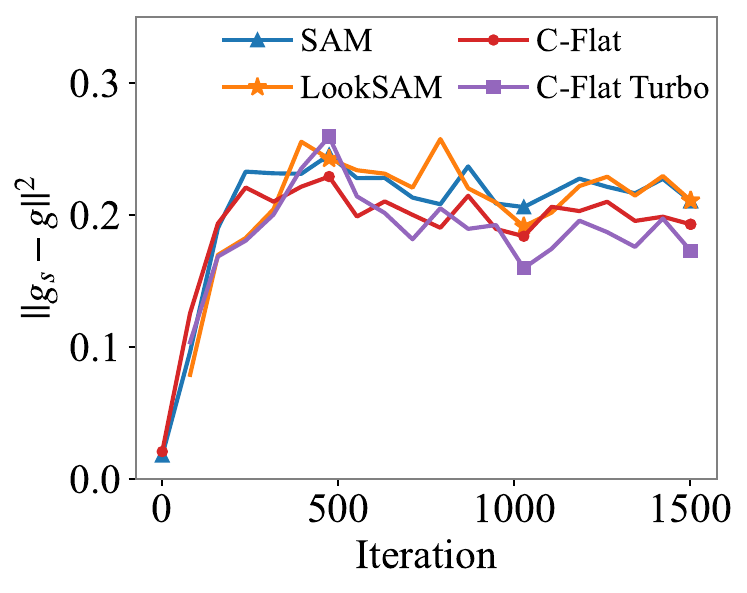}
        \caption{Variation on sharpness}
        \label{subfig:sharpness}
    \end{subfigure}\hfil
    \begin{subfigure}{0.245\linewidth}
        \centering
        \includegraphics[width=1.\textwidth]{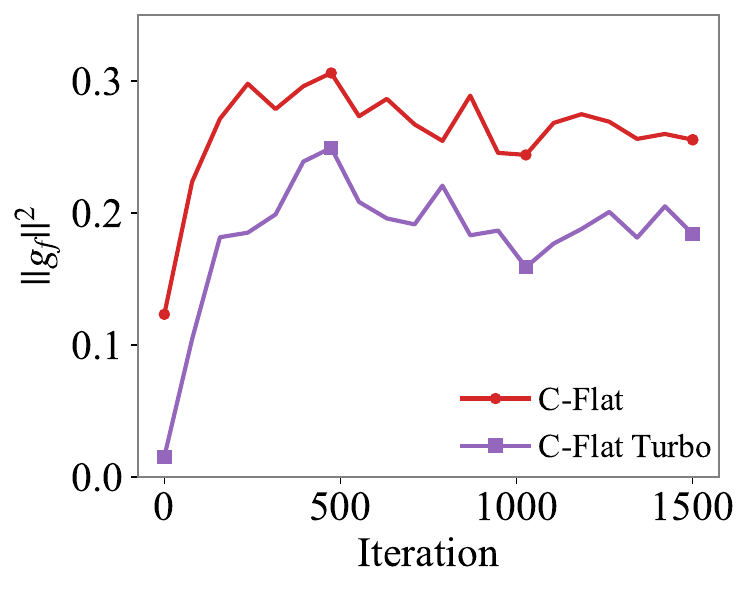}
        \caption{Variation on flatness}
        \label{subfig:flatness}
    \end{subfigure}
    \caption{(a) Sensitivity analysis of $k$. (b) - (d) Evolution of loss, sharpness and flatness on EASE w/ and w/o C-Flat Turbo.}
    \vspace{-3mm}
    \label{fig:ease}
\end{figure*}

\begin{table*}[t]
\vspace{-2mm}
\caption{Accuracy results with ResNet-18 and ResNet-34 trained from scratch. \textbf{Bolded} indicates the best result.}
\label{tab:resnet}
\centering
\small
\begin{tabular}{l cccccc} 
\toprule
\multirow{2}{*}{Method} & \multicolumn{3}{c}{ResNet-18} & \multicolumn{3}{c}{ResNet-34} \\ 
                        & Avg$\uparrow$ & Last$\uparrow$ & Img/s$\uparrow$
                        & Avg$\uparrow$ & Last$\uparrow$ & Img/s$\uparrow$            \\ 
\midrule
iCaRL~\cite{rebuffi2017icarl}          
                        & 59.13$_{\pm0.30}$  & 41.23$_{\pm0.91}$ & 2333.3 (100\%)
                        & 58.80$_{\pm1.04}$  & 41.26$_{\pm0.99}$ & 1250.4 (100\%) \\
\textit{+C-Flat~\cite{bian2024make}}  
                        & 59.45$_{\pm0.18}$  & 42.47$_{\pm0.06}$ & 686.3 (\textcolor{forestred}{29.4\%})
                        & 59.55$_{\pm0.93}$  & 42.09$_{\pm0.61}$ & 359.8 (\textcolor{forestred}{28.8\%}) \\
\textit{+C-Flat Turbo}      
                        & \bf{59.84$_{\mathbf{\pm0.05}}$}  & \bf{42.84$_{\mathbf{\pm0.18}}$} 
                        & 1750.1 (\textcolor{forestgreen}{75.0\%})
                        & \bf{59.75$_{\mathbf{\pm0.55}}$}  & \bf{42.34$_{\mathbf{\pm0.54}}$} 
                        & 960.6 (\textcolor{forestgreen}{76.8\%}) \\ 
\midrule
MEMO~\cite{zhou2022model}               
                        & 48.63$_{\pm0.78}$  & 29.19$_{\pm0.89}$ & 2413.8 (100\%)
                        & 68.49$_{\pm1.74}$  & 57.05$_{\pm1.46}$ & 1873.2 (100\%) \\
\textit{+C-Flat~\cite{bian2024make}}                        
                        & 49.98$_{\pm0.61}$  & 30.76$_{\pm0.57}$ & 886.1 (\textcolor{forestred}{36.7\%})
                        & 69.00$_{\pm1.39}$  & 59.29$_{\pm0.73}$ & 569.1 (\textcolor{forestred}{30.4\%}) \\
\textit{+C-Flat Turbo}      
                        & \bf{50.51$_{\mathbf{\pm0.55}}$}  & \bf{32.24$_{\mathbf{\pm0.27}}$} 
                        & 1891.9 (\textcolor{forestgreen}{78.4\%}) 
                        & \bf{69.48$_{\mathbf{\pm1.25}}$}  & \bf{59.33$_{\mathbf{\pm0.65}}$} 
                        & 1372.5 (\textcolor{forestgreen}{73.3\%}) \\
\bottomrule
\end{tabular}
\end{table*}

\subsection{Training From Scratch}
\label{subsec:scratch}

Table~\ref{tab:resnet} presents the accuracy results of iCaRL and MEMO with ResNet-18 and ResNet-34 trained from scratch. Notably, iCaRL benefits significantly from C-Flat Turbo, achieving an increase of 1.61\% in last accuracy on ResNet-18 and 1.08\% on ResNet-34. Similarly, MEMO exhibits substantial gains, particularly in final stage accuracy, where C-Flat Turbo improves performance by 3.05\% on ResNet-18 and 2.28\% on ResNet-34. Moreover, C-Flat Turbo forgets less than C-Flat across training, likely due to its softer sharpness constraint around local minima. The consistent improvements across different backbone architectures highlight the robustness of C-Flat Turbo in mitigating catastrophic forgetting while maintaining computational efficiency. Furthermore, the larger performance gains observed in MEMO suggest that C-Flat Turbo is particularly beneficial for expansion-based approaches, which involve multiple module updates and often struggle with stability and adaptability. These findings validate C-Flat Turbo as a highly effective strategy for continual learning, offering substantial accuracy improvements while enabling flexible training strategies.

\begin{table*}
\caption{Accuracy and training speeds of L2P and EASE across various optimizers. Task orders are shuffled for evaluation on dynamic sequences. \textcolor{forestred}{Red} and \textcolor{forestgreen}{green} denote the baseline and the efficient optimizer. \textbf{Bolded} indicates the best result.}
\label{tab:other_optim}
\centering
\small
\begin{tabular}{lcccccc}
\toprule
\multirow{2}{*}{Method} & \multicolumn{3}{c}{L2P~\cite{wang2022learning}}  & \multicolumn{3}{c}{EASE~\cite{zhou2024expandable}}               \\
                        & Avg$\uparrow$         & Last$\uparrow$        & Img/s$\uparrow$           & Avg$\uparrow$         & Last$\uparrow$        & Img/s$\uparrow$   \\
\midrule
Vanilla                 & 87.92$_{\pm1.30}$ & 83.66$_{\pm1.49}$ & 110.29 (100\%)  & 91.16$_{\pm0.71}$ & 87.49$_{\pm0.32}$ & 166.67 (100\%)  \\
\textit{+SAM~\cite{foret2020sharpness}}                    & 88.18$_{\pm1.39}$ & 83.84$_{\pm1.39}$ & 56.60 (\textcolor{forestred}{51.3\%}) & 91.36$_{\pm0.82}$ & 87.61$_{\pm0.27}$ & 86.71 (\textcolor{forestred}{52.0\%}) \\
\textit{+LookSAM~\cite{liu2022towards}}                & 88.35$_{\pm1.39}$ & 83.80$_{\pm1.20}$ & 88.76 (\textcolor{forestgreen}{80.5\%}) & 90.89$_{\pm0.86}$ & 86.99$_{\pm0.28}$ & 132.74 (\textcolor{forestgreen}{79.6\%}) \\
\textit{+C-Flat~\cite{bian2024make}}                 & 88.62$_{\pm0.88}$ & 84.04$_{\pm0.60}$ & 28.63 (\textcolor{forestred}{30.0\%}) & 91.60$_{\pm1.18}$ & 87.69$_{\pm0.50}$ & 44.25 (\textcolor{forestred}{26.5\%}) \\
\textit{+C-Flat Turbo}          & \bf{89.57$_{\mathbf{\pm0.36}}$} & \bf{84.48$_{\mathbf{\pm0.09}}$} 
& 65.50 \textcolor{forestgreen}{(59.4\%)} 
                                & \bf{91.75$_{\mathbf{\pm0.80}}$} & \bf{87.74$_{\mathbf{\pm0.20}}$} 
& 102.74 \textcolor{forestgreen}{(61.6\%)} \\
\bottomrule
\vspace{-4mm}
\end{tabular}
\end{table*}

\subsection{Comparison with Other Optimizers}
In Table~\ref{tab:other_optim}, we report the average accuracy, last accuracy, and training speeds on the CIFAR100 B0\_Inc10 setting compared to various zeroth-order optimizers. In particular, the update interval for the sharpness term in LookSAM and the flatness term in C-Flat Turbo is fixed to 5.

For the performance, as shown in Table~\ref{tab:other_optim}, we first observe that SAM and LookSAM do not offer obvious benefits over the vanilla optimizer, but C-Flat series shows significant improvement. The reason is that the backbone parameters loaded from the pre-trained model already possess strong generalization ability, resulting in uniformly low losses around local minima under various parameter perturbations. Nevertheless, C-Flat Turbo further refines the strong generalization of the pre-trained model by the horizontal and vertical components of the oracle gradient, thus boost CL.

For the training speeds, as concluded in Table~\ref{tab:other_optim}, although LookSAM significantly accelerates training compared to SAM by reusing historical gradients, it degrades performance on EASE due to its single zeroth-order regularization and the simplistic use of past iteration gradients.
C-Flat Turbo differs from LookSAM in that it progressively updates sharpness gradients to vanilla components and imposes more strict constraints to encourage convergence to a flatter region. Compared with C-Flat, our efficient method improves throughput from 30.0\% to 59.4\% on L2P and from 26.5\% to 61.6\% on EASE, making it even faster than SAM.

\subsection{Ablation Studies}
\textbf{Parameter Sensitivity.} In Figure~\ref{subfig:params}, we empirically investigate the sensitivity of the scale factor $\beta$ and the sampling step $k$ in EASE, trained under the CIFAR100 B0\_Inc10 setting. Here, $k$ denotes the frequency of leveraging the sharpness and flatness properties in C-Flat Turbo. For fairness, the sharpness and flatness steps are kept equal, meaning that C-Flat Turbo-$k$ indicates the cached sharpness and flatness gradients are updated every $k$ iterations. The hyperparameters $\rho$ and $\lambda$ are fixed at 0.05 and 0.2, respectively. \Cref{subfig:params} demonstrates that $\beta=0.8$ is the optimal choice in most cases. Moreover, when $k=2$ and $k=5$, the performance exhibits similar accuracy fluctuations as long as $\beta$ lies within the range [0.4, 1.0].

\noindent\textbf{Evolution of Sharpness and Flatness.}  
We approximate the sharpness and flatness gradients using cached branches and current SGD directions, following the same optimization procedure as C-Flat. This efficient approach does not slow convergence. As shown in Figure~\ref{subfig:loss}, C-Flat Turbo converges as fast as other optimizers. Figure~\ref{subfig:sharpness} illustrates that all optimizers start with low sharpness initially, owing to the pre-trained backbone’s generalization, and that the sharpness then declines alongside the training loss. Notably, the sharpness and flatness gradients in C-Flat Turbo converge to lower values than in C-Flat, due to the intermediate gradient descent steps being free of regularization constraints.

\subsection{Discussion on Scheduler Choices}
\label{subsec:sche}
  
Figure~\ref{fig:sche} compares the accuracy and training speed of MEMO and EASE. For the linear scheduler, we increase the step size $k$ with the task number $n$, following $k = 5 + 10 \cdot n / N$, whereas in the variant without a scheduler, $k$ is fixed at 5.
MEMO expands its architecture for each new task, which increases both the number of parameters and the computational cost, thereby prolonging training. The figure shows that C-Flat Turbo, with or without the scheduler consistently outperforms vanilla C-Flat in accuracy. In terms of speed, both variants are faster, and the scheduler yields an additional speedup of approximately 15\%.
In contrast, EASE reuses frozen adapters, which keeps the training cost stable. As $k$ increases, the scheduler provides roughly a 30\% faster over C-Flat while maintaining comparable accuracy.

\begin{figure}  
    \centering
    \begin{subfigure}{0.95\linewidth}
        \centering
        \includegraphics[width=\textwidth]{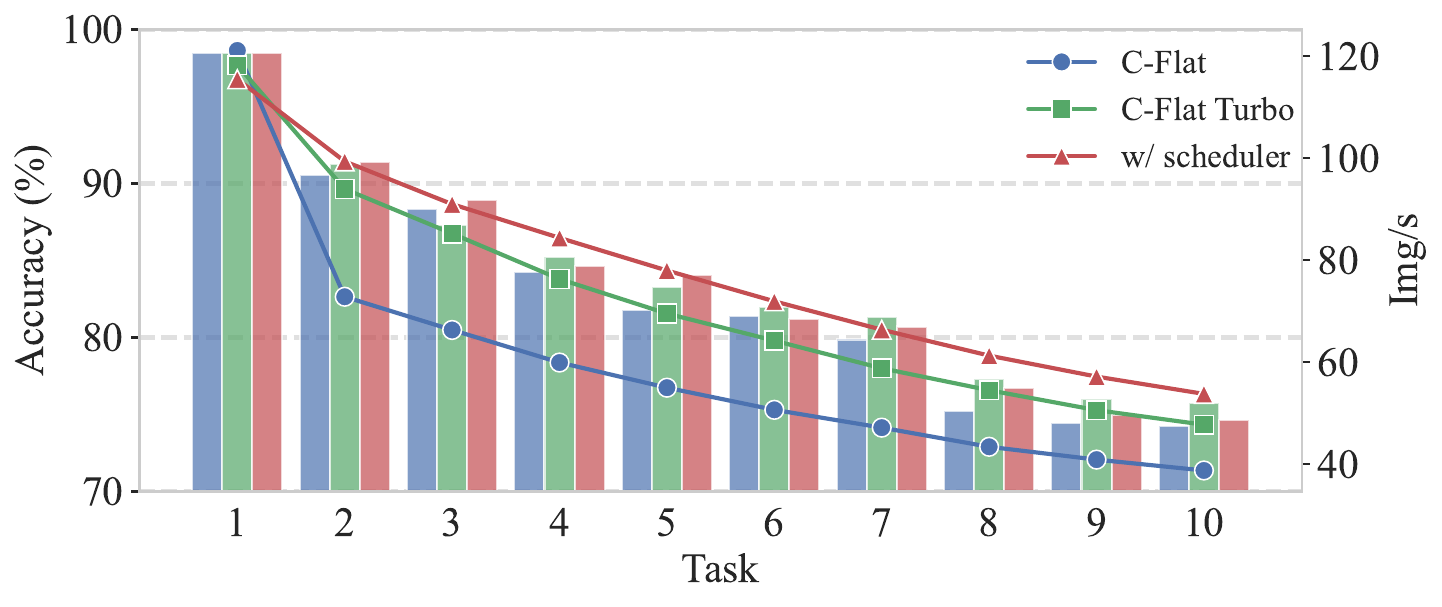}
    \end{subfigure}
    \hfil
    \begin{subfigure}{0.95\linewidth}
        \centering
        \includegraphics[width=\textwidth]{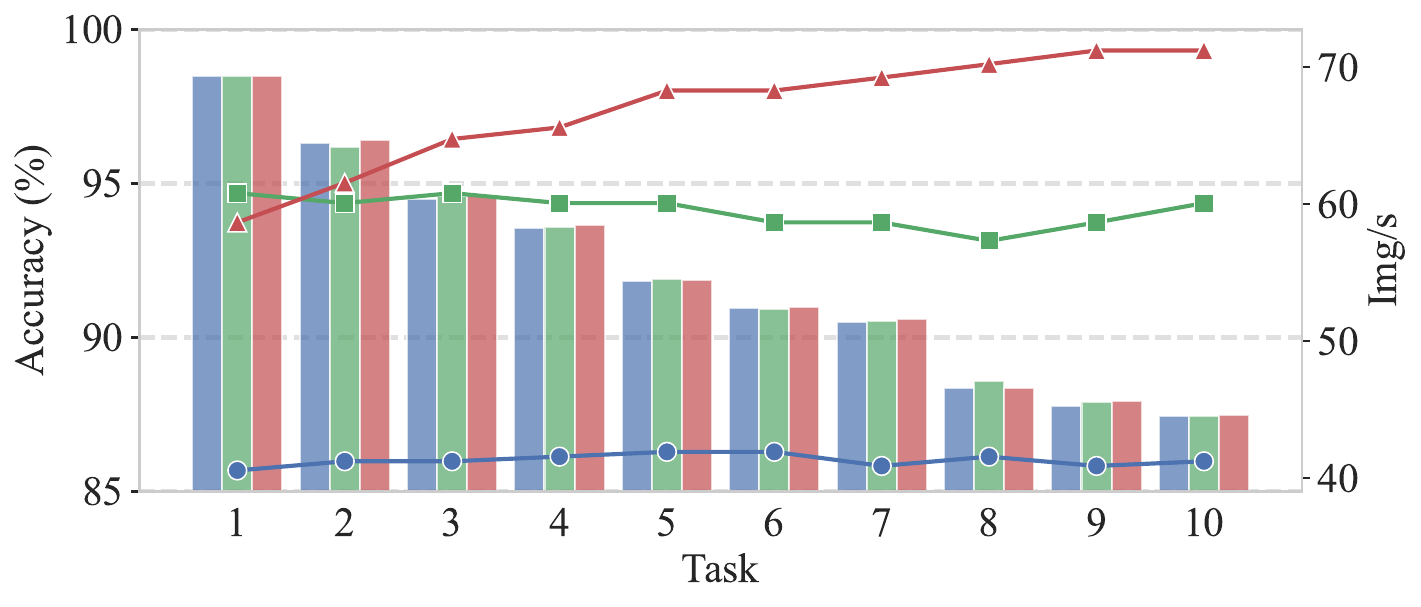}
    \end{subfigure}
    \vspace{-2mm}
    \caption{Accuracy and training speed comparison of different Turbo scheduling strategies.
      The upper part shows results on MEMO, while the lower part shows results on EASE.}
    \label{fig:sche}
    \vspace{-4mm}
  \end{figure}

\section{Conclusion}

This paper proposes C-Flat Turbo, an efficient optimizer that selectively takes shortcuts along stable directions toward flatness. By reusing historical sharpness and flatness signals instead of recomputing them at every step, C-Flat Turbo accelerates C-Flat while preserving its regularization effect. Building on the empirical observation that flatness gradients diminish across tasks, we further introduce (i) a linear scheduler that adaptively adjusts the turbo interval and (ii) an adaptive trigger that selectively activates C-Flat regularization only when it is most beneficial. Overall, our results reveal that adaptively modifying the oracle gradient can yield tangible efficiency gains in continual learning, and that C-Flat Turbo can be seamlessly plugged into a wide range of CL methods to provide consistent training speedups.

\textbf{Limitations.} 
Although C-Flat Turbo is more efficient than C-Flat, it remains costlier than standard optimizers due to the overhead of estimating the flatness component. Further reducing this cost, such as through cheaper finite-difference schemes or lower-rank approximations, is important future work. Moreover, our evaluation is limited to typical and PTM-based CIL benchmarks. Extending C-Flat Turbo to other CL scenarios, like vision-language models or reinforcement learning agents, may further reveal its generalization and robustness.

{
    \small
    \bibliographystyle{ieeenat_fullname}
    \bibliography{main}
}

\clearpage
\setcounter{page}{1}

\appendix

\section{Appendix}
\label{sec:appendix}
\subsection{Symbols and Notations of C-Flat Turbo}
\label{sec:symbols}
\begin{itemize}
\item model parameter: $\boldsymbol{\theta}$;
\item SAM perturbed model parameter: $\boldsymbol{\theta} + \boldsymbol{\epsilon}_0^* = \boldsymbol{\theta} + \rho \cdot \frac{\nabla \mathcal{L}(\boldsymbol{\theta})}{\|\nabla \mathcal{L}(\boldsymbol{\theta})\|}$;
\item proxy model parameter: $\boldsymbol{\theta} + \boldsymbol{\epsilon}_1^* = \boldsymbol{\theta} + \rho \cdot (\boldsymbol{g}_s - \boldsymbol{g}) / \| \boldsymbol{g}_s - \boldsymbol{g} \|$;
\item proxy perturbed model parameter: $\boldsymbol{\theta} + \boldsymbol{\epsilon}_1^* + \rho \cdot \nabla \mathcal{L}(\boldsymbol{\theta} + \boldsymbol{\epsilon}_1^*) / \| \nabla \mathcal{L}(\boldsymbol{\theta} + \boldsymbol{\epsilon}_1^*) \|$;
\item the empirical loss: $\mathcal{L}(\boldsymbol{\theta})$, with its gradient $\boldsymbol{g}$;
\item the SAM loss: $\mathcal{L}_{SAM}(\boldsymbol{\theta}) = \mathcal{L}(\boldsymbol{\theta}) + \mathcal{R}_{\rho}^0(\boldsymbol{\theta}) = \max_{\|\boldsymbol{\epsilon}_0\| \leq \rho} \; \mathcal{L}(\boldsymbol{\theta} + \boldsymbol{\epsilon}_0)$ with its gradient $\boldsymbol{g}_s$;
\item the C-Flat loss: $\mathcal{L}_{CFlat}(\boldsymbol{\theta}) = \mathcal{L}(\boldsymbol{\theta}) + \mathcal{R}_{\rho}^0(\boldsymbol{\theta}) + \lambda \cdot \mathcal{R}_{\rho}^1(\boldsymbol{\theta}) = \max_{\|\boldsymbol{\epsilon}_0\| \leq \rho} \; \mathcal{L}(\boldsymbol{\theta} + \boldsymbol{\epsilon}_0) + \lambda \cdot \rho  \max_{\|\boldsymbol{\epsilon}_1\| \leq \rho} \; 
\Big\| \nabla \mathcal{L}(\boldsymbol{\theta} + \boldsymbol{\epsilon}_1) \Big\|$, with its gradient $\boldsymbol{g}_s + \lambda\boldsymbol{g}_f$;
\item the gradient of proxy model: $\boldsymbol{g}_0 = \nabla \mathcal{L}(\boldsymbol{\theta} + \boldsymbol{\epsilon}_1^*)$;
\item the gradient of proxy perturbed model: $\boldsymbol{g}_1 = \nabla \mathcal{L}(\boldsymbol{\theta} + \boldsymbol{\epsilon}_1^* + \rho \cdot \nabla \mathcal{L}(\boldsymbol{\theta} + \boldsymbol{\epsilon}_1^*) / \| \nabla \mathcal{L}(\boldsymbol{\theta} + \boldsymbol{\epsilon}_1^*) \|)$;
\item the empirical loss term: $\boldsymbol{g} = \nabla \mathcal{L}(\boldsymbol{\theta})$;
\item the zeroth-order sharpness term: $\boldsymbol{g}_s - \boldsymbol{g} = \nabla \mathcal{R}_{\rho}^0(\boldsymbol{\theta})$;
\item the first-order flatness term: $\boldsymbol{g}_f = \nabla \mathcal{R}_{\rho}^1(\boldsymbol{\theta})$;
\item the direction-invariant sharpness component: $\boldsymbol{g}_{vs}
  = \boldsymbol{g}_s
   - \frac{\langle \boldsymbol{g}_s, \boldsymbol{g} \rangle}{\|\boldsymbol{g}\|^2}\,\boldsymbol{g}$;
\item the direction-invariant flatness component: $\boldsymbol{g}_{vf}
  = \boldsymbol{g}_f
   - \frac{\langle \boldsymbol{g}_f, \boldsymbol{g}_0 \rangle}{\|\boldsymbol{g}_0\|^2}\,\boldsymbol{g}_0$.
\end{itemize}

\subsection{Derivation of Equation~\ref{eq:grad_f}}
\label{subsec:derivation}

Following~\cite{bian2024make, zhang2023gradient}, the gradient of the first-order flatness loss $\mathcal{R}_{\rho}^1$ is:
\begin{align}
\label{eq:R1-grad-exact}
    \nabla_{\boldsymbol{\theta}} \mathcal{R}_{\rho}^1(\boldsymbol{\theta})   
    &= \rho \cdot \nabla_{\boldsymbol{\theta}} \max_{\boldsymbol{\epsilon} \in B(0, \rho)} \| \nabla \mathcal{L}(\boldsymbol{\theta} + \boldsymbol{\epsilon}) \| \\
    &= \rho \cdot \nabla_{\boldsymbol{\theta}} \| \nabla \mathcal{L}(\boldsymbol{\theta} + \boldsymbol{\epsilon}_1^*) \| \notag \\
    &= \rho \cdot \left( \frac{\partial}{\partial \boldsymbol{\theta}} \| \nabla \mathcal{L}(\boldsymbol{\theta} + \boldsymbol{\epsilon}_1^*) \| 
    + \frac{\partial \boldsymbol{\epsilon}_1^*}{\partial \boldsymbol{\theta}} \cdot \nabla_{\boldsymbol{\epsilon}} \| \nabla \mathcal{L}(\boldsymbol{\theta} + \boldsymbol{\epsilon}) \| \big|_{\boldsymbol{\epsilon} = \boldsymbol{\epsilon}_1^*} \right) \notag \\
    &\approx \rho \cdot \nabla_{\boldsymbol{\theta}} \| \nabla \mathcal{L}(\boldsymbol{\theta} + \boldsymbol{\epsilon}_1^*) \| \notag
\end{align}

Here, $\epsilon_1^*$ denotes the optimal perturbation that maximizes the gradient norm within the $\ell_2$-ball $B(0, \rho)$. To make the computation tractable, we approximate it using first-order Taylor expansion and finite differences:
\begin{align}
    \boldsymbol{\epsilon}_1^* 
    &= \arg\max_{\boldsymbol{\epsilon} \in B(0, \rho)} \left\| \nabla \mathcal{L}(\boldsymbol{\theta} + \boldsymbol{\epsilon}) \right\| \\ \nonumber
    &\approx \arg\max_{\boldsymbol{\epsilon} \in B(0, \rho)} 
    \left( \left( \nabla_{\boldsymbol{\theta}} \left\| \nabla \mathcal{L}(\boldsymbol{\theta}) \right\| \right)^T \boldsymbol{\epsilon} \right)  \\ \nonumber
    &= \rho \cdot \frac{ \nabla_{\boldsymbol{\theta}} \left\| \nabla \mathcal{L}(\boldsymbol{\theta}) \right\| }
                { \left\| \nabla_{\boldsymbol{\theta}} \left\| \nabla \mathcal{L}(\boldsymbol{\theta}) \right\| \right\| } \\ \nonumber
    &\approx \rho \cdot \frac{ \nabla \mathcal{L}(\boldsymbol{\theta} + \boldsymbol{\delta}) - \nabla \mathcal{L}(\boldsymbol{\theta}) }
                        { \left\| \nabla \mathcal{L}(\boldsymbol{\theta} + \boldsymbol{\delta}) - \nabla \mathcal{L}(\boldsymbol{\theta}) \right\| }  \\ \nonumber
    &= \rho \cdot \frac{ \boldsymbol{g}_s - \boldsymbol{g} }{ \| \boldsymbol{g}_s - \boldsymbol{g} \| },
\end{align}
where $\boldsymbol{g} = \nabla \mathcal{L}(\boldsymbol{\theta})$, $\boldsymbol{g}_s = \nabla \mathcal{L}(\boldsymbol{\theta} + \boldsymbol{\delta})$, and $\boldsymbol{\delta} = \rho' \cdot \frac{\boldsymbol{g}}{\|\boldsymbol{g}\|}$ is a small perturbation in the direction of the gradient, with $\rho' \ll \rho$.

Let $\boldsymbol{\theta}_p = \boldsymbol{\theta} + \boldsymbol{\epsilon}_1^*$ be the perturbed model after maximizing the gradient norm. Then Eq.~\ref{eq:R1-grad-exact} continues as:
\begin{align}
\label{eq:g-f-fd}
    \nabla_{\boldsymbol{\theta}} \mathcal{R}_\rho^1(\boldsymbol{\theta})  
    &\approx \rho \cdot \nabla_{\boldsymbol{\theta}} \Big\| \nabla \mathcal{L}(\boldsymbol{\theta}_p) \Big\| \\ \nonumber
    &\approx \frac{\rho}{\rho'} \cdot \Big[ \nabla \mathcal{L}\Big(\boldsymbol{\theta}_p + \rho' \cdot \frac{ \nabla \mathcal{L}(\boldsymbol{\theta}_p) }{ \| \nabla \mathcal{L}(\boldsymbol{\theta}_p) \| } \Big) - \nabla \mathcal{L}(\boldsymbol{\theta}_p) \Big],
\end{align}
where $\rho' \ll \rho$ is a small step size for finite-difference approximation.

In our symbol table, we define
\begin{align}
    \boldsymbol{g}_0
    &:= \nabla \mathcal{L}(\boldsymbol{\theta}_p)
     = \nabla \mathcal{L}(\boldsymbol{\theta} + \boldsymbol{\epsilon}_1^*),
     \label{eq:g0-def}\\
    \boldsymbol{g}_1
    &:= \nabla \mathcal{L}\Big(
       \boldsymbol{\theta}_p
       + \rho' \cdot \frac{\nabla \mathcal{L}(\boldsymbol{\theta}_p)}
       {\|\nabla \mathcal{L}(\boldsymbol{\theta}_p)\|}
       \Big),
     \label{eq:g1-def}
\end{align}

Then Eq.~\ref{eq:g-f-fd} can be written as the finite-difference form
\begin{equation}
    \boldsymbol{g}_f
    \;\approx\; \frac{\rho}{\rho'} \bigl(\boldsymbol{g}_1 - \boldsymbol{g}_0\bigr).
    \label{eq:g-f-from-g0-g1}
\end{equation}
For simplicity of notation, we absorb the constant factor $\rho/\rho'$ into
$\lambda$, so that at the level of directions we can view
\[
  \boldsymbol{g}_f \propto \boldsymbol{g}_1 - \boldsymbol{g}_0.
\]

Finally, we define the direction-invariant component of $\boldsymbol{g}_f$
with respect to $\boldsymbol{g}_0$:
\begin{equation}
    \boldsymbol{g}_{vf}
    = \boldsymbol{g}_f - \frac{\langle \boldsymbol{g}_f, \boldsymbol{g}_0 \rangle}
      {\|\boldsymbol{g}_0\|^2} \cdot \boldsymbol{g}_0,
    \label{eq:gvf-def}
\end{equation}
which is orthogonal to $\boldsymbol{g}_0$ and is the quantity tracked
by the EMA variable in the implementation.

\subsection{Hyperparameter Settings}
\label{sec:param}

We report the main hyperparameter settings for methods trained on CIFAR100 in Table~\ref{app:hyper}. For the other datasets, we follow the original settings from the open-source repository and keep $k=5$ and $\beta=0.8$ fixed to balance efficiency and performance.

We also study the sensitivity of the scheduler and trigger threshold in C-Flat Turbo on CIFAR100. For the scheduler $k_t = k_0 + c \cdot \frac{n}{N}$, larger $k_0$ generally improves throughput with negligible performance changes, while $c=10$ provides a good trade-off between speed and accuracy, as shown in Table~\ref{app:sche}. This suggests that moderately increasing the turbo interval over the task sequence is sufficient to obtain most of the efficiency gains. For the trigger threshold $\|\boldsymbol{g}_{0j}\|^2 > \mu_{f,j} + m \cdot \sigma_{f,j}$, Table~\ref{app:trig} shows that $m$ between 0.2 and 1 performs best overall. Larger values improve efficiency but tend to degrade performance, whereas smaller values may lead to less stable trigger behavior.

\begin{table}[t]
\scriptsize
\centering
\caption{Hyperparameter settings for CIFAR100.}
\label{app:hyper}
\setlength{\tabcolsep}{1.2mm}
\renewcommand{\arraystretch}{1.2}
\begin{tabular}{cccccccccc}
\toprule
Methods & Epochs & LR     & BS & Tasks & Exemplar & $\rho$ & $\lambda$ & $k$ & $\beta$ \\
\midrule
iCaRL  & 20     & 1e-3   & 32 & 10    & 20       & 0.1    & 0.2       & 5   & 0.8      \\
MEMO   & 20     & 1e-3   & 32 & 10    & 20       & 0.1    & 0.2       & 5   & 0.8      \\
L2P    & 5      & 2e-3   & 16 & 10    & -        & 0.02   & 0.2       & 5   & 0.8      \\
Ranpac & 5      & 1e-2   & 16 & 10    & -        & 0.05   & 0.2       & 5   & 0.8      \\
EASE   & 5      & 2.5e-3 & 16 & 10    & -        & 0.05   & 0.2       & 5   & 0.8      \\
\bottomrule
\end{tabular}
\end{table}

\begin{table}[h]
\scriptsize
\centering
\caption{Sensitivity analysis of the scheduler hyperparameters in C-Flat Turbo on CIFAR100.}
\label{app:sche}
\setlength{\tabcolsep}{1.2mm}
\renewcommand{\arraystretch}{1.2}
\begin{tabular}{cccccccccc}
\toprule
Optimizer & \multicolumn{9}{c}{C-Flat Turbo} \\
\midrule
$k_0$  & 2     & 2     & 2     & 5     & 5     & 5     & 10    & 10    & 10    \\
$c$    & 5     & 10    & 20    & 5     & 10    & 20    & 5     & 10    & 20    \\
Avg    & 92.19 & 92.18 & 92.05 & 92.08 & 92.08 & 92.01 & 92.07 & 92.04 & 92.04 \\
Last   & 87.67 & 87.59 & 87.45 & 87.62 & 87.54 & 87.42 & 87.51 & 87.59 & 87.46 \\
Img/s  & 99.17 & 99.91 & 63.19 & 104.86 & 102.74 & 99.05 & 104.64 & 106.89 & 107.39 \\
\bottomrule
\end{tabular}
\end{table}

\begin{table}[h]
\scriptsize
\centering
\caption{Sensitivity analysis of the trigger threshold hyperparameter in C-Flat Turbo on CIFAR100.}
\label{app:trig}
\setlength{\tabcolsep}{1.2mm}
\renewcommand{\arraystretch}{1.2}
\begin{tabular}{ccccccc}
\toprule
$m$   & 0.1   & 0.2   & 0.5   & 1     & 2     & 5     \\
\midrule
Avg   & 91.98 & 92.07 & 92.05 & 92.06 & 91.86 & 91.94 \\
Last  & 87.43 & 87.48 & 87.39 & 87.50 & 87.07 & 87.16 \\
Img/s & 75.56 & 105.43 & 97.69 & 102.74 & 154.52 & 135.18 \\
\bottomrule
\end{tabular}
\end{table}

\subsection{Memory Usage}
The cached gradients are used to substitute partial sharpness-aware gradient computations, so their memory usage heavily depends on the number of trainable parameters in the model. As shown in Table~\ref{tab:memory}, although larger models require more cached gradients, the overall memory overhead remains almost negligible relative to the expansion typically caused by large architectures.

\begin{table}[h]
\centering
\scriptsize
\caption{Memory usage of different architectures. EASE uses a frozen backbone, while iCaRL updates the full backbone.}
\label{tab:memory}
\renewcommand{\arraystretch}{1.2}
\setlength{\tabcolsep}{1.2mm}
\begin{tabular}{l cccc}
\toprule
Method & Optimizer & Backbone & Trainable / total params & Memory \\
\midrule
\multirow{4}{*}{EASE}
& C-Flat & ViT-Base-16  & 1.19M / 86.99M   & 2.14GB \\
& C-Flat Turbo  & ViT-Base-16  & 1.19M / 86.99M   & 2.15GB \\
& C-Flat & ViT-Large-16 & 3.17M / 306.47M  & 5.34GB \\
& C-Flat Turbo  & ViT-Large-16 & 3.17M / 306.47M  & 5.37GB \\
\midrule
\multirow{4}{*}{iCaRL}
& C-Flat & ResNet-18    & 11.17M / 11.17M  & 1.55GB \\
& C-Flat Turbo  & ResNet-18    & 11.17M / 11.17M  & 1.66GB \\
& C-Flat & ResNet-34    & 21.28M / 21.28M  & 2.32GB \\
& C-Flat Turbo  & ResNet-34    & 21.28M / 21.28M  & 2.51GB \\
\bottomrule
\end{tabular}
\end{table}

\subsection{Comparison to Similar Works}
We further compare C-Flat Turbo with SS-SAM and AE-SAM on L2P and EASE, as shown in Table~\ref{app:sams}. Overall, C-Flat Turbo consistently achieves the best performance on both benchmarks, while maintaining competitive efficiency.

\begin{table}[h]
\scriptsize
\centering
\caption{Comparison with SS-SAM and AE-SAM on L2P (left) and EASE (right).}
\label{app:sams}
\setlength{\tabcolsep}{1.2mm}
\renewcommand{\arraystretch}{1.2}
\begin{tabular}{lccc ccc}
\toprule
& \multicolumn{3}{c}{L2P} & \multicolumn{3}{c}{EASE} \\
Method & Avg & Last & Img/s & Avg & Last & Img/s \\
\midrule
+SS-SAM       & 89.50 & 84.45 & 77.42 (70.20\%) & 90.90 & \underline{87.87} & 91.65 (54.99\%) \\
+AE-SAM       & \underline{89.70} & \underline{84.48} & 80.86 (73.32\%) & \underline{92.03} & 87.37 & 112.58 (67.54\%) \\
+C-Flat Turbo & \textbf{89.78} & \textbf{84.69} & 65.50 (59.4\%) & \textbf{92.36} & \textbf{87.96} & 102.74 (61.6\%) \\
\bottomrule
\end{tabular}
\end{table}

\subsection{Other CL Settings}
We further evaluate C-Flat Turbo with DUCT on DIL and with DualPrompt on TIL and CIL. As shown in Tables~\ref{app:dil} and~\ref{app:til}, Turbo consistently improves performance over the corresponding baselines.

\begin{table}[h]
\scriptsize
\centering
\caption{Results on DomainNet under the DIL setting.}
\label{app:dil}
\setlength{\tabcolsep}{1.2mm}
\renewcommand{\arraystretch}{1.2}
\begin{tabular}{lccccccc}
\toprule
Method & Clipart & Infograph & Painting & Quickdraw & Real & Sketch & Avg \\
\midrule
DUCT
& 74.21 & 60.52 & 67.73 & \underline{63.50} & 69.51 & 68.63 & 67.35 \\
+C-Flat
& \textbf{74.93} & \underline{60.88} & \textbf{68.10} & 63.30 & \underline{70.22} & \underline{69.41} & \underline{67.81} \\
+C-Flat Turbo
& \underline{74.89} & \textbf{60.91} & \underline{68.01} & \textbf{63.75} & \textbf{70.82} & \textbf{69.50} & \textbf{67.98} \\
\bottomrule
\end{tabular}
\end{table}

\begin{table}[h]
\scriptsize
\centering
\caption{Results on CIFAR100 under the CIL and TIL settings.}
\label{app:til}
\setlength{\tabcolsep}{1.2mm}
\renewcommand{\arraystretch}{1.2}
\begin{tabular}{lcccc}
\toprule
\multirow{2}{*}{Method} & \multicolumn{2}{c}{CIL} & \multicolumn{2}{c}{TIL} \\
& Avg & Last & Avg & Last \\
\midrule
DualPrompt & 85.77 & \textbf{89.10} & 97.79 & \textbf{99.40} \\
+C-Flat     & 85.85 & 87.80           & 97.67 & 99.30           \\
+C-Flat Turbo      & \textbf{86.01} & 87.00   & \textbf{97.94} & \textbf{99.40} \\
\bottomrule
\end{tabular}
\end{table}

\subsection{Per-task Accuracy and Ablation Studies}
\label{sec:per-task}
Per-task accuracy provides a more detailed view of the continual learning process. As shown in Table~\ref{tab:pertask}, the reuse mechanism significantly reduces training time with minimal performance loss, while the linear step-size scheduler further improves speed, particularly for longer task sequences. The adaptive trigger additionally accelerates training, as it allows basic single propagation gradient descent in certain stages. 

Regarding performance gains, prior works have shown that selectively applying SAM updates can outperform applying SAM throughout training. For instance, SS-SAM~\cite{zhao2022stochastic} explicitly demonstrates that with appropriate scheduling, models can achieve comparable or even superior performance at substantially lower computational cost compared to training exclusively with SAM. Similar observations also have been reported for AE-SAM~\cite{jiang2023adaptive} and SAM-In-Late-Phase~\cite{zhou2024sharpness}. 

\begin{table*}[t]
\centering
\small
\caption{Per-task accuracy and ablation study results for EASE trained on the 10-split CIFAR100 dataset.}
\label{tab:pertask}
\setlength{\tabcolsep}{1.2mm}
\renewcommand{\arraystretch}{1.2}
\begin{tabular}{l c c c cccccccccc c c}
\toprule
Method & reuse & scheduler & trigger & T1 & T2 & T3 & T4 & T5 & T6 & T7 & T8 & T9 & T10 & Avg & Img/s \\
\midrule
EASE
& $\times$ & $\times$ & $\times$
& 98.40 & 96.25 & 94.63 & 93.88 & 91.80 & 90.92 & 90.47 & 88.09 & 87.58 & 87.17 & 91.92 & 166.67 \\

+C-Flat
& $\times$ & $\times$ & $\times$
& 98.50 & 96.45 & 94.87 & 94.08 & 91.94 & 91.05 & 90.64 & 88.44 & 87.93 & 87.58 & 92.15 & 44.25 \\

\multirow{3}{*}{+C-Flat Turbo}
& $\checkmark$ & $\times$ & $\times$
& 98.50 & 96.37 & 94.77 & 93.94 & 91.92 & 91.05 & 90.67 & 88.28 & 87.81 & 87.45 & 92.08 & 67.20 \\
& $\checkmark$ & $\checkmark$ & $\times$
& 98.40 & 96.31 & 94.74 & 93.89 & 91.90 & 91.00 & 90.70 & 88.25 & 87.75 & 87.40 & 92.03 & 74.63 \\
& $\checkmark$ & $\checkmark$ & $\checkmark$
& 98.50 & \textbf{96.60} & \textbf{95.07} & \textbf{94.15} & \textbf{92.08} & \textbf{91.27} & \textbf{90.73} & \textbf{88.52} & \textbf{88.00} & \textbf{87.57} & \textbf{92.25} & \textbf{102.74} \\
\bottomrule
\end{tabular}
\end{table*}

\subsection{Detailed Evolution of Gradient Distances}
\label{suppsec:diff_g_detail}

\begin{figure*}[t]
    \centering
    \begin{subfigure}[b]{0.19\textwidth}
        \includegraphics[width=\textwidth]{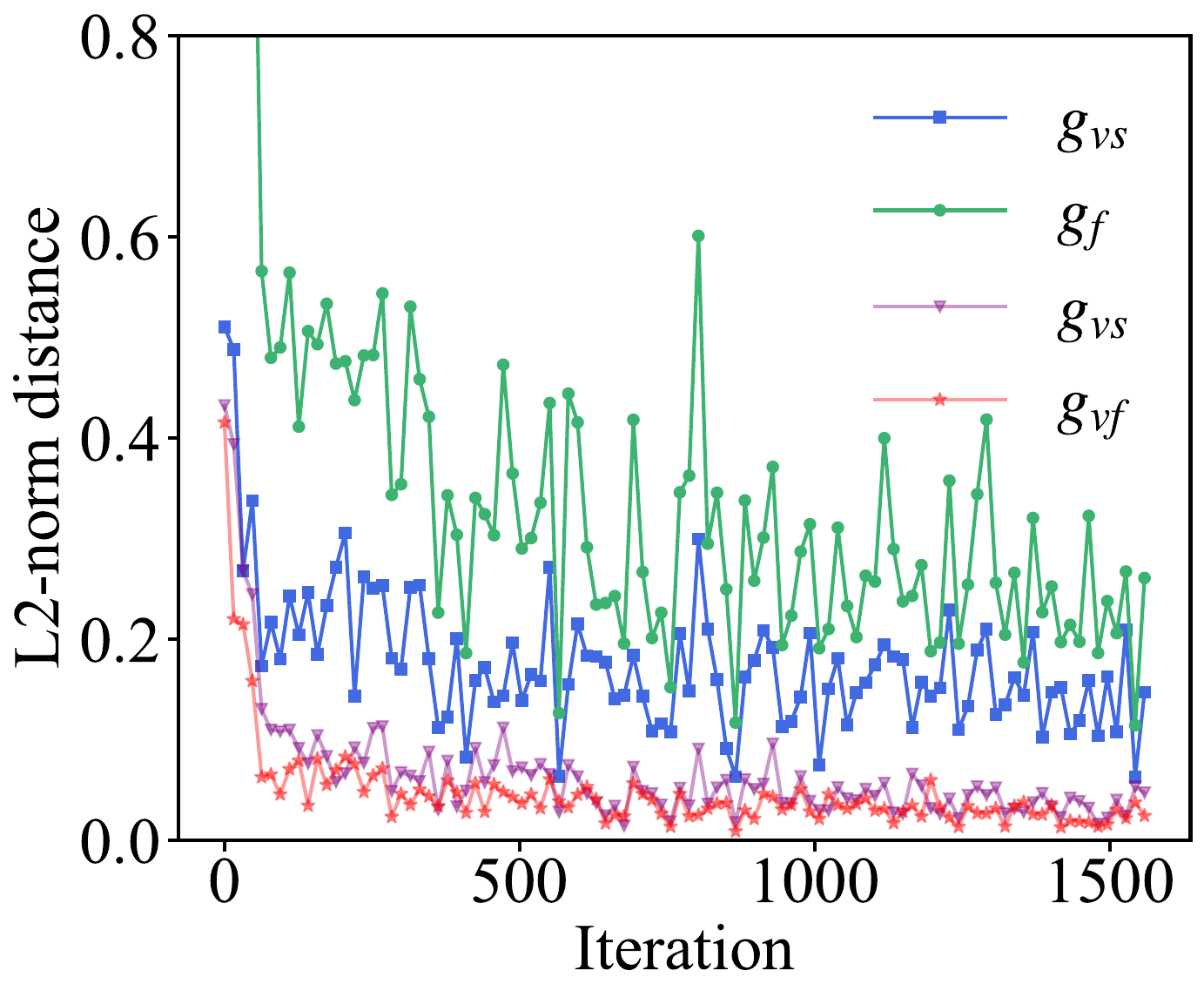}
        \caption{Task 0}
    \end{subfigure}
    \begin{subfigure}[b]{0.19\textwidth}
        \includegraphics[width=\textwidth]{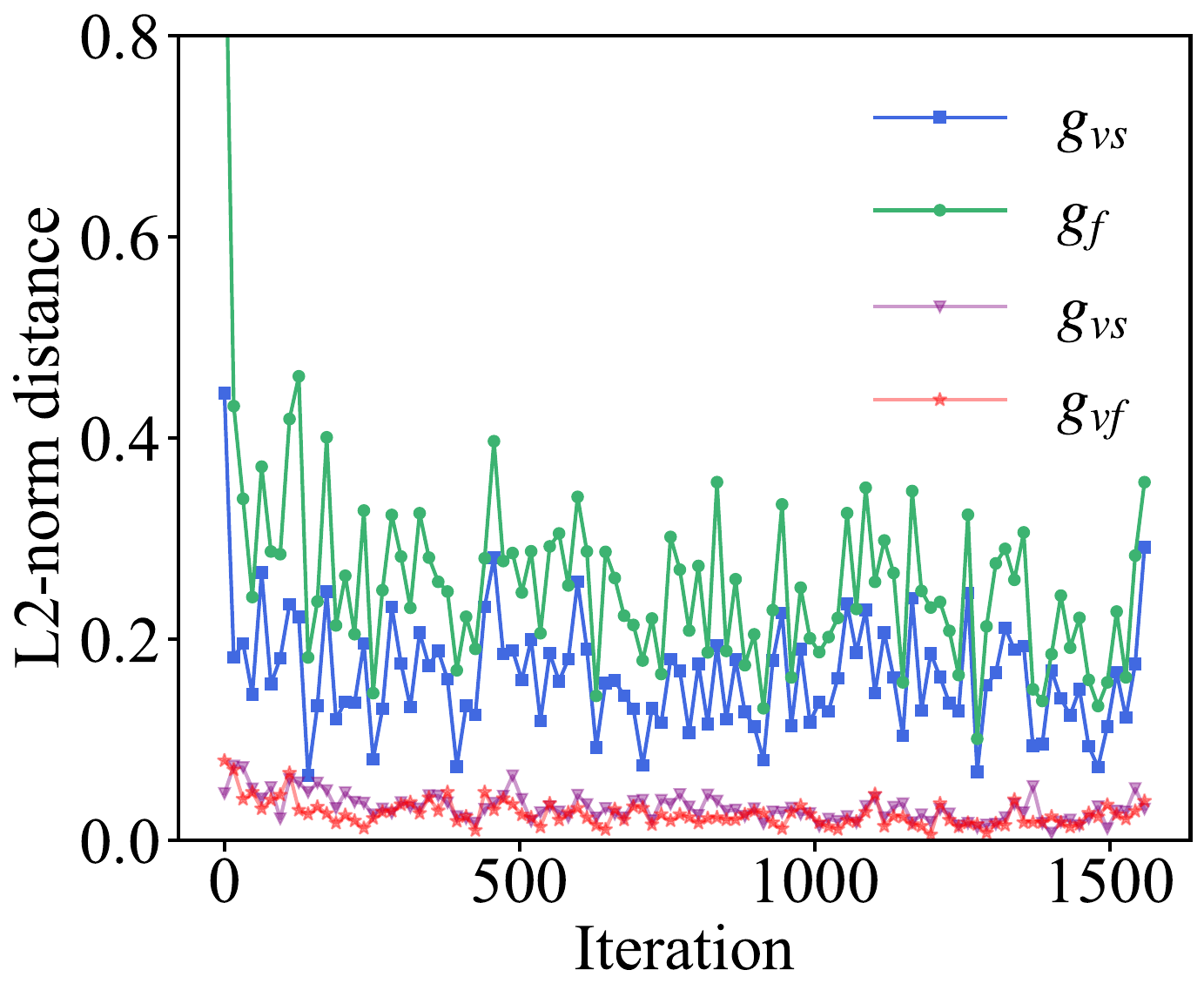}
        \caption{Task 1}
    \end{subfigure}
    \begin{subfigure}[b]{0.19\textwidth}
        \includegraphics[width=\textwidth]{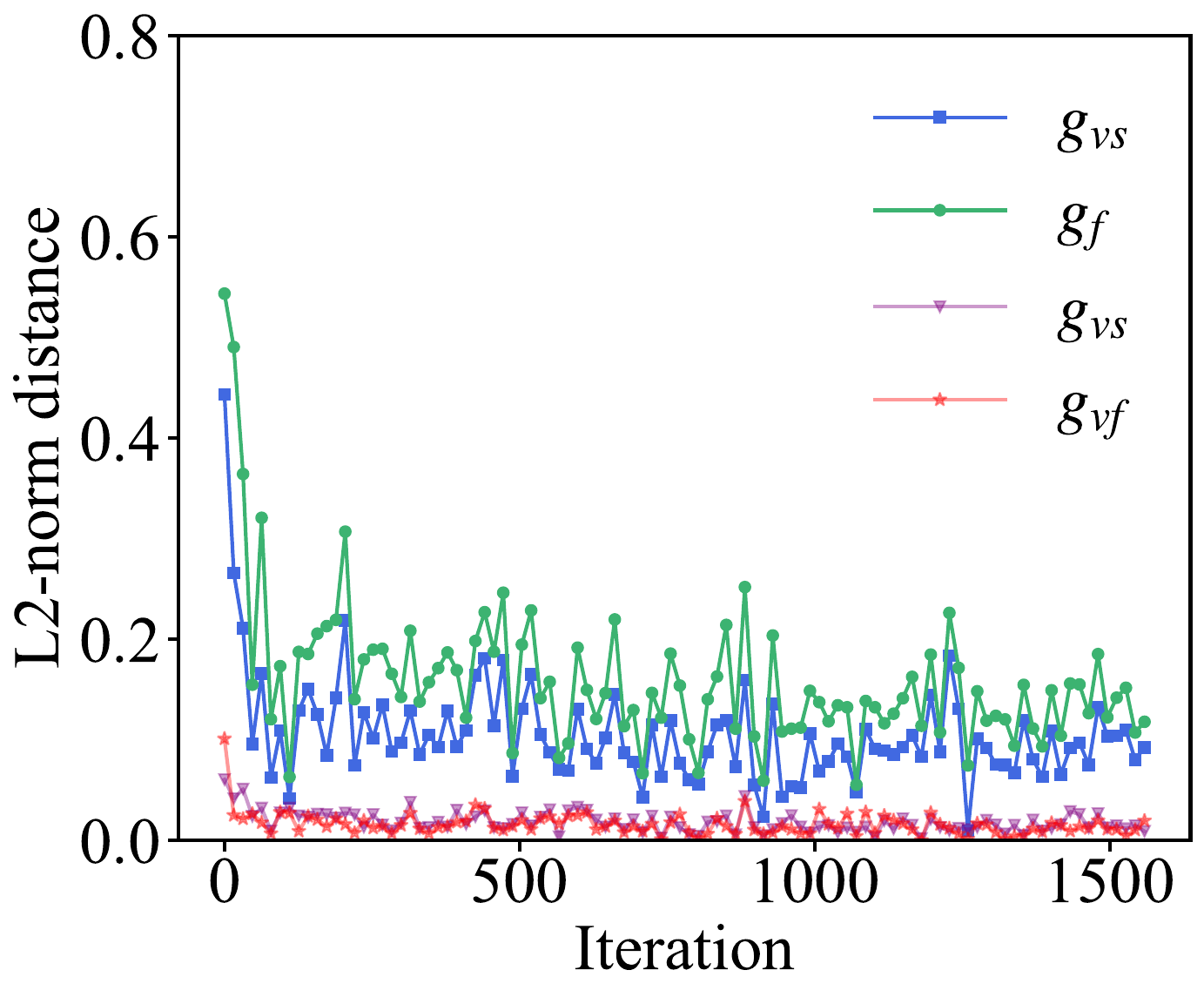}
        \caption{Task 2}
    \end{subfigure}
    \begin{subfigure}[b]{0.19\textwidth}
        \includegraphics[width=\textwidth]{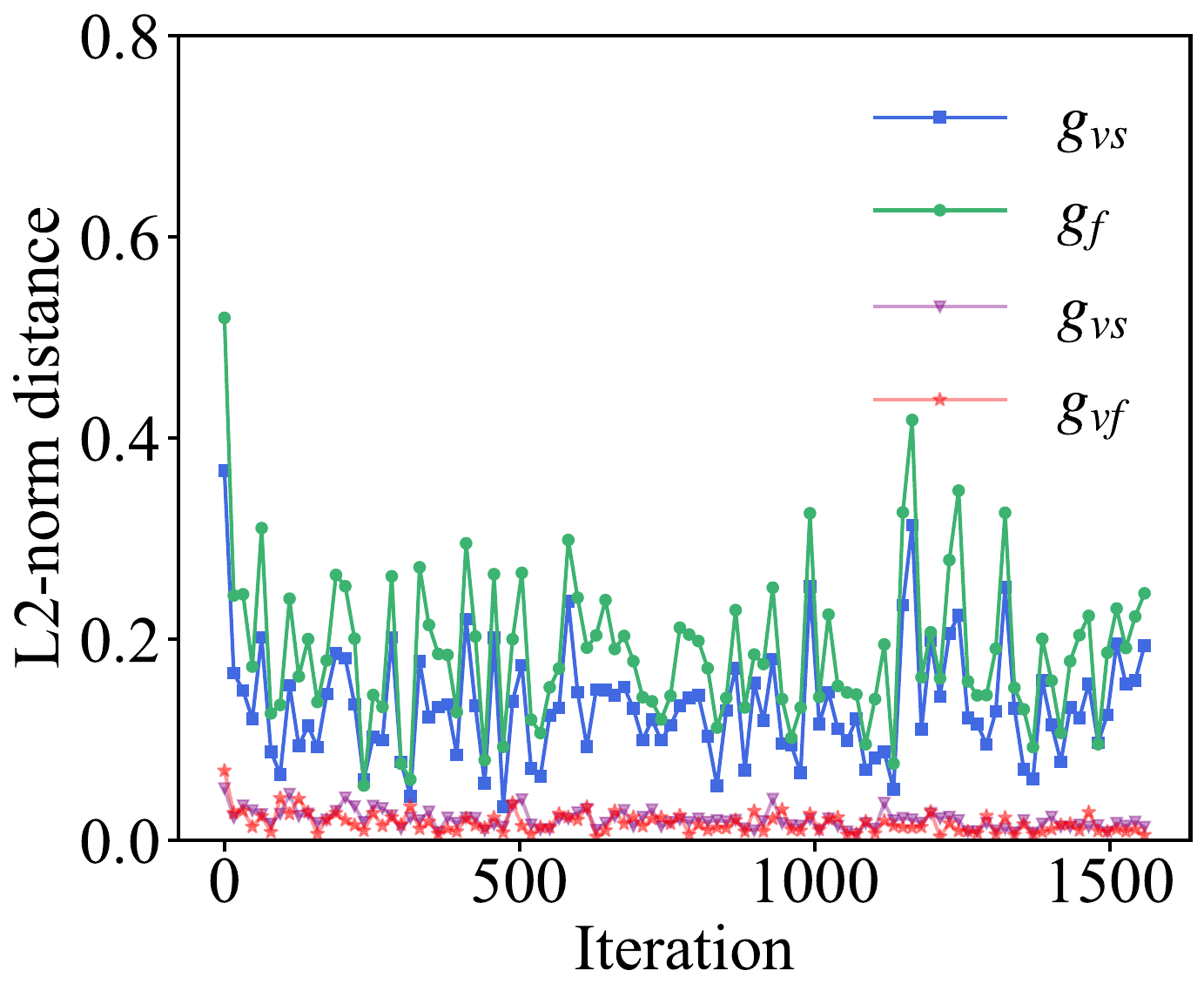}
        \caption{Task 3}
    \end{subfigure}
    \begin{subfigure}[b]{0.19\textwidth}
        \includegraphics[width=\textwidth]{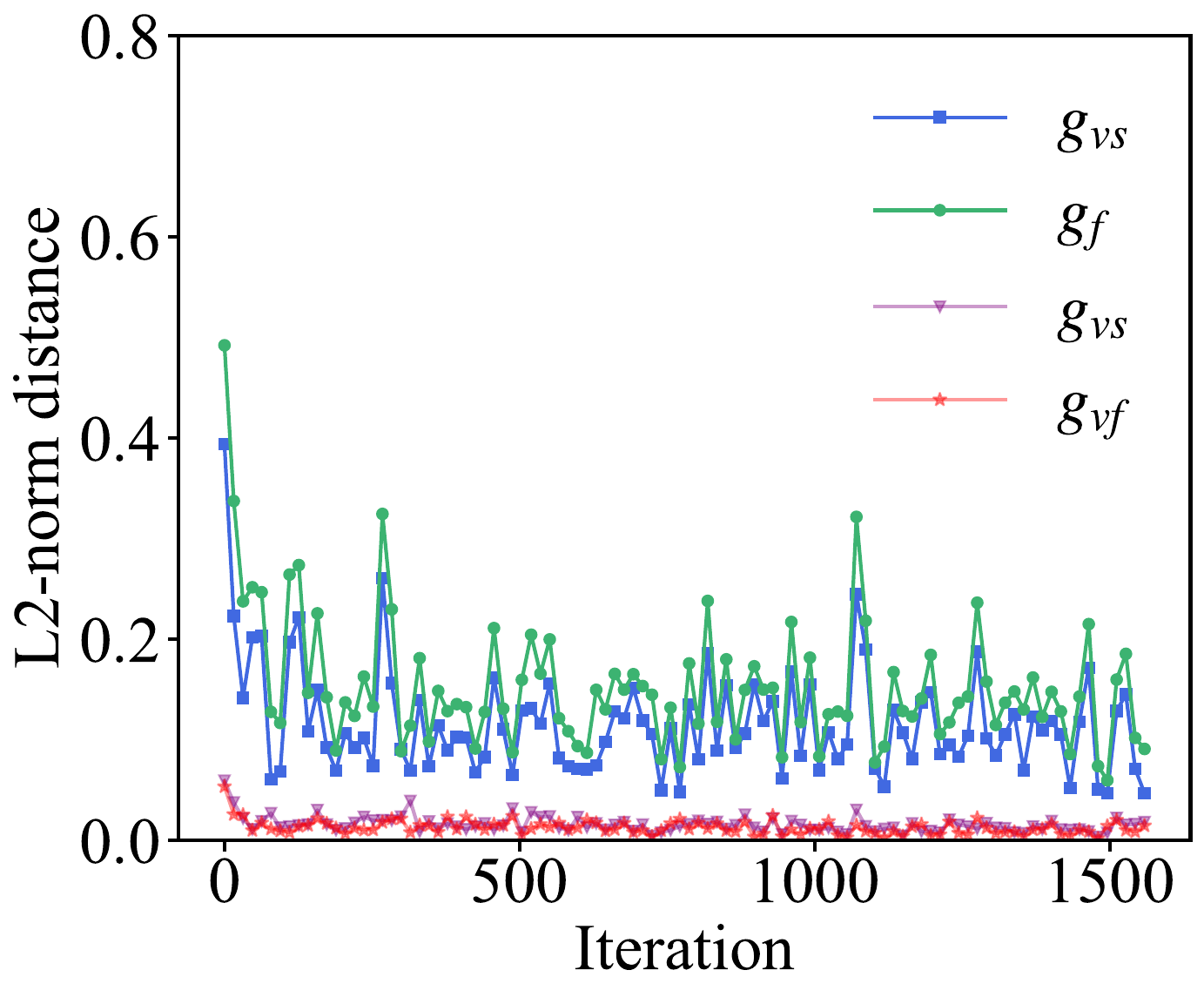}
        \caption{Task 4}
    \end{subfigure}
    \\
    \begin{subfigure}[b]{0.19\textwidth}
        \includegraphics[width=\textwidth]{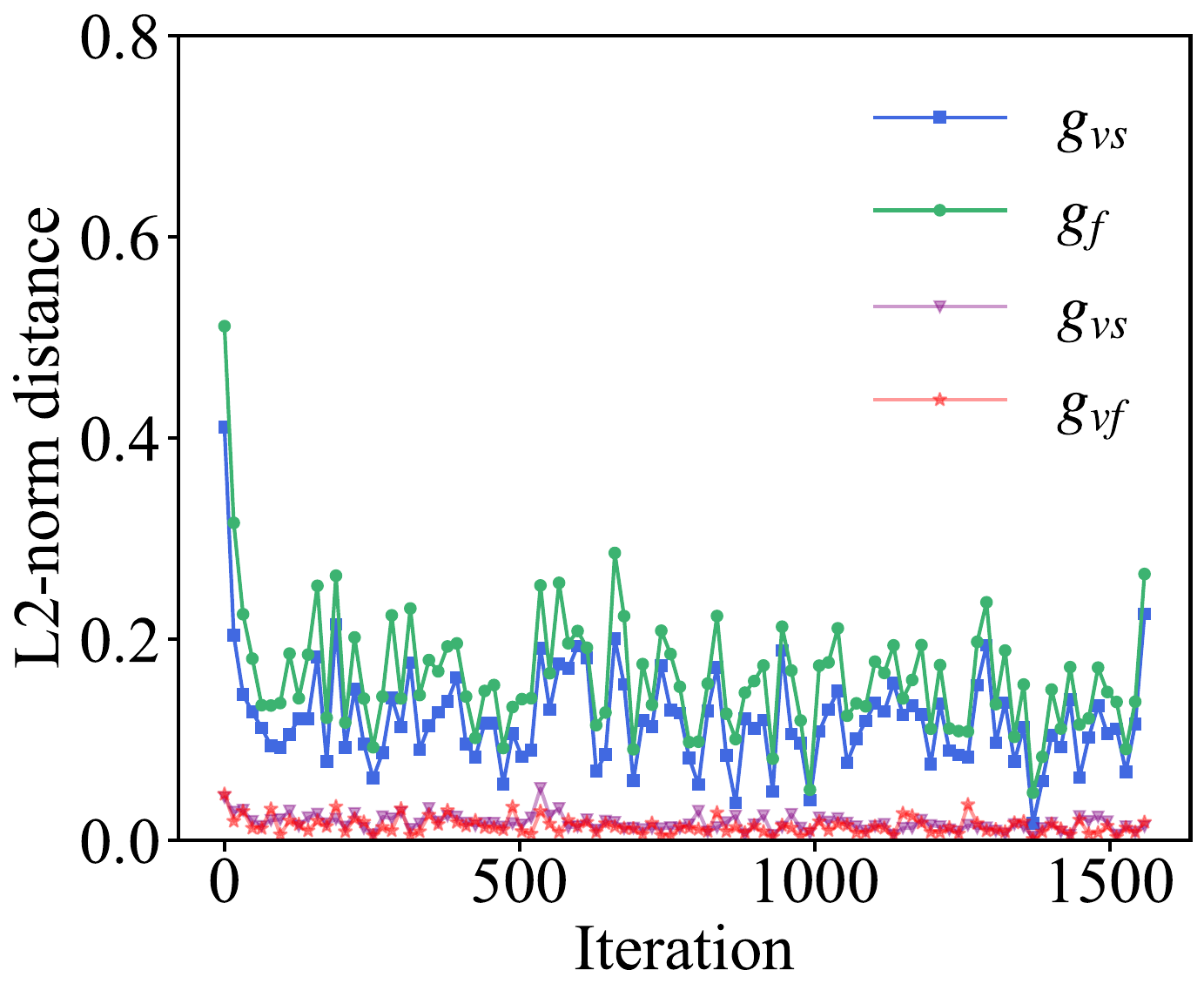}
        \caption{Task 5}
    \end{subfigure}
    \begin{subfigure}[b]{0.19\textwidth}
        \includegraphics[width=\textwidth]{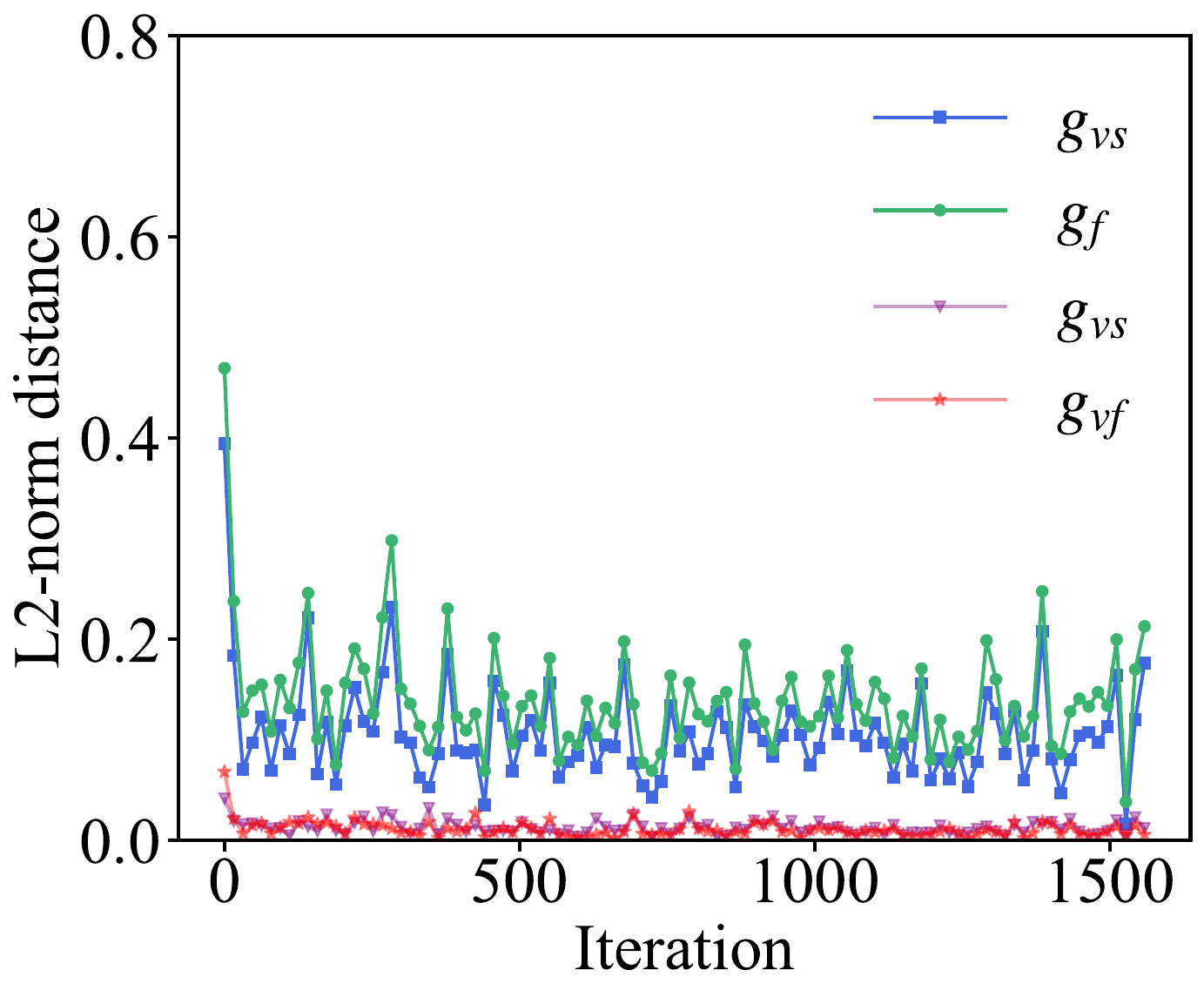}
        \caption{Task 6}
    \end{subfigure}
    \begin{subfigure}[b]{0.19\textwidth}
        \includegraphics[width=\textwidth]{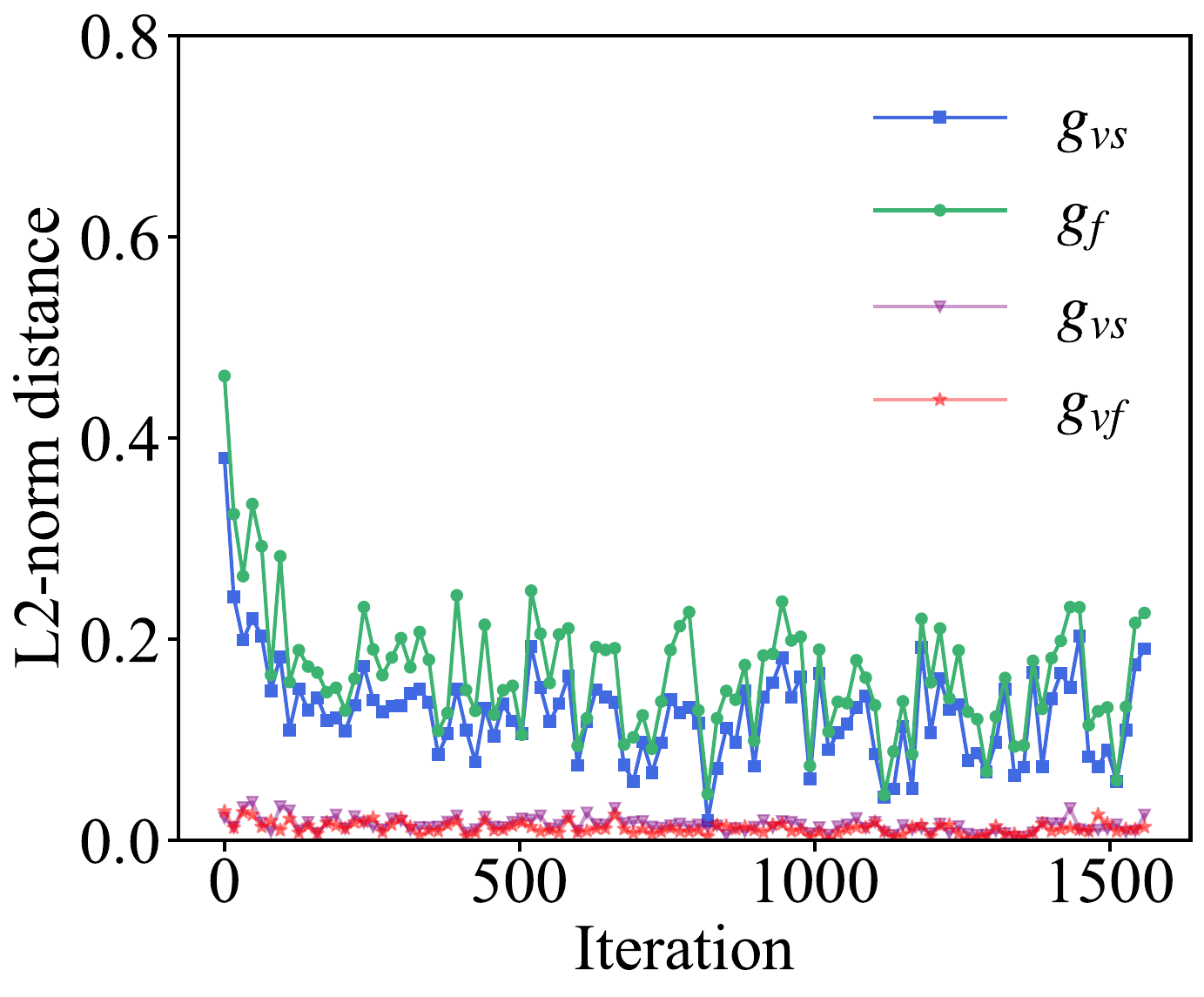}
        \caption{Task 7}
    \end{subfigure}
    \begin{subfigure}[b]{0.19\textwidth}
        \includegraphics[width=\textwidth]{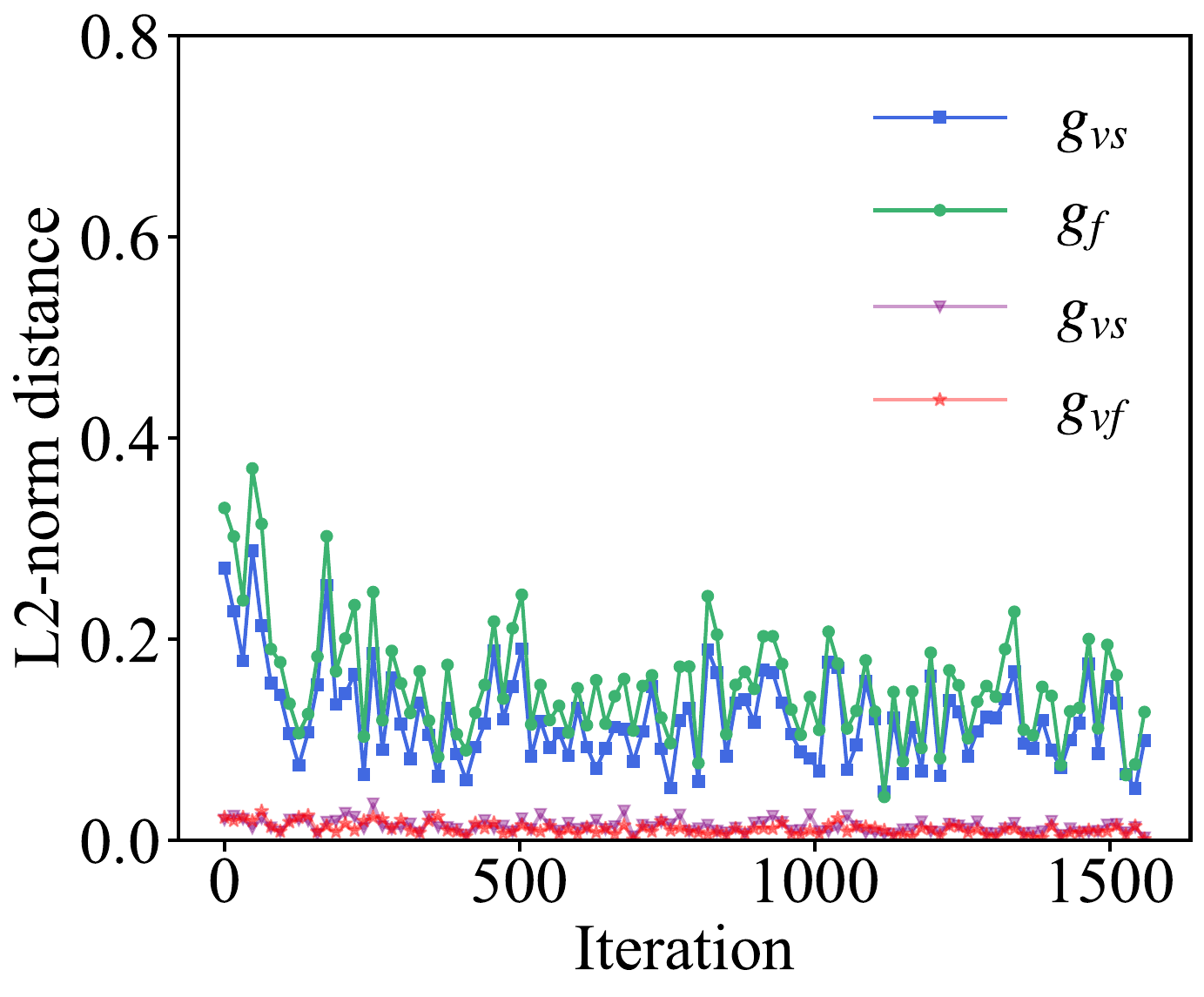}
        \caption{Task 8}
    \end{subfigure}
    \begin{subfigure}[b]{0.19\textwidth}
        \includegraphics[width=\textwidth]{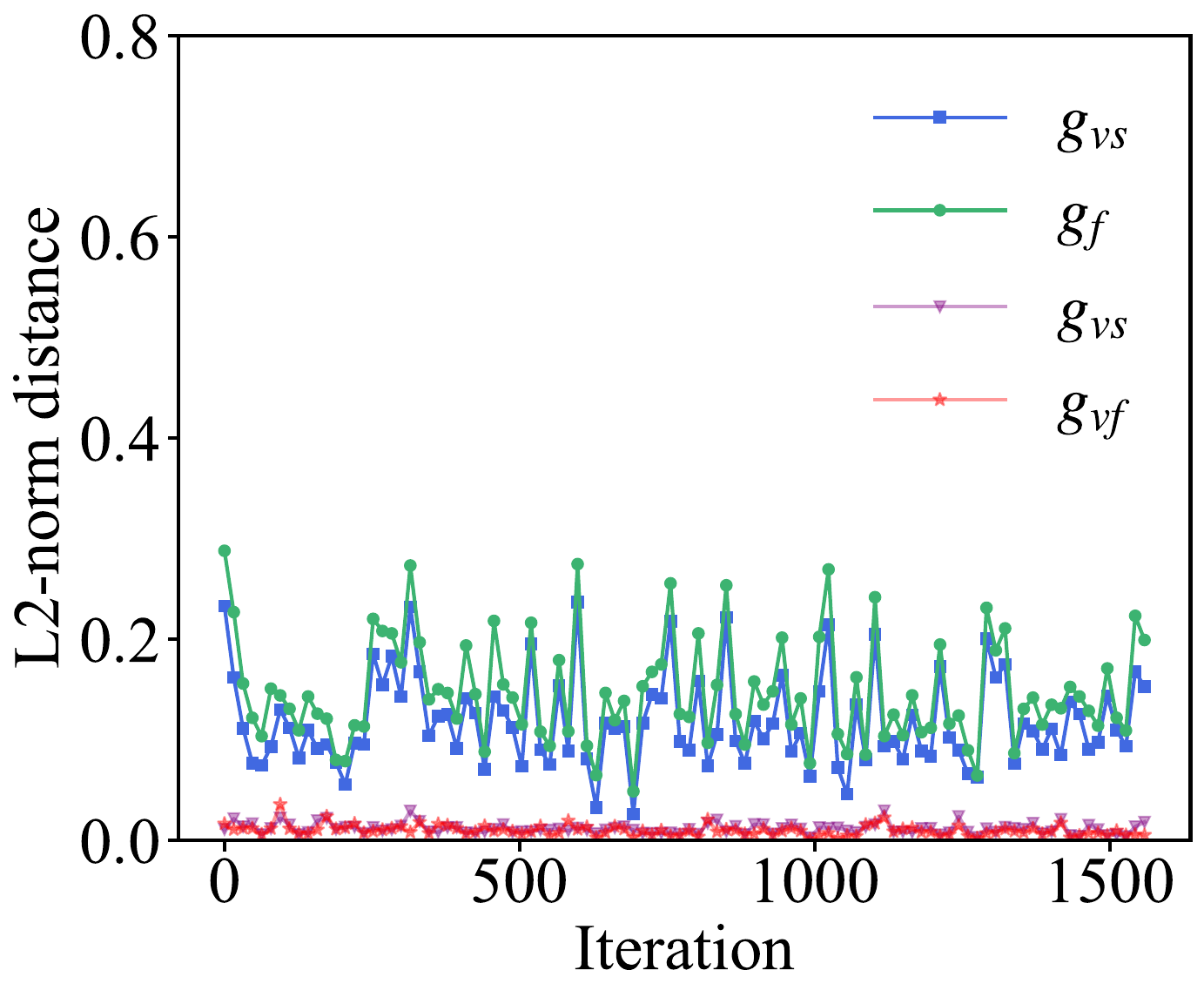}
        \caption{Task 9}
    \end{subfigure}
    \caption{Visualization of L2-norm distances of the gradients every 5 steps across 10 tasks.}
    \label{fig:diff_tasks}
\end{figure*}

\Cref{fig:diff_tasks} shows the L2-norm distances between sharpness and flatness gradients and their reference gradients across tasks. While $g$ and $g_0$ exhibit significant fluctuations during training, the gradients $g_{vs}$ and $g_{vf}$, core to zeroth-order sharpness and first-order flatness regularization, change much more slowly. This stability suggests their potential as shortcut directions for flat region exploration, bypassing the need for model ascent and backpropagation.



\subsection{Convergence of Turbo in the Surrogate Steps}
\label{sec:turbo-conv-appendix}

With the definitions in Appendix~\ref{sec:symbols}, we denote C-Flat gradient at iteration $t$ as
\begin{equation}
  \boldsymbol{G}_t
  := \nabla \mathcal{L}_{\mathrm{CFlat}}(\boldsymbol{\theta}_t)
  = \boldsymbol{g}_s + \lambda \boldsymbol{g}_f,
  \label{eq:Gt-def}
\end{equation}
then we can rewrite
\begin{equation}
  \boldsymbol{G}_t
  =
  \underbrace{\boldsymbol{g}}_{\text{empirical gradient}}
  +
  \underbrace{(\boldsymbol{g}_s - \boldsymbol{g})}_{\text{sharpness increment}}
  +
  \lambda \underbrace{\boldsymbol{g}_f}_{\text{flatness term}},
  \label{eq:Gt-decomp}
\end{equation}
where $\boldsymbol{g} = \nabla \mathcal{L}(\boldsymbol{\theta}_t)$ is the empirical gradient.

In the exact C-Flat updates, the sharpness increment $(\boldsymbol{g}_s - \boldsymbol{g})$
is obtained by subtracting the empirical gradient from the SAM gradient, and the flatness term
$\boldsymbol{g}_f$ is obtained via the finite-difference approximation
$\boldsymbol{g}_f \approx \frac{\rho}{\rho'}(\boldsymbol{g}_1 - \boldsymbol{g}_0)$
as in Eq.~\ref{eq:g-f-from-g0-g1}.

In Turbo, we do not recompute $\boldsymbol{g}_s$ and $\boldsymbol{g}_f$
at every surrogate step. Instead, we maintain EMA states
$\boldsymbol{g}_{vs,t}$ and $\boldsymbol{g}_{vf,t}$ to track
\[
  \boldsymbol{u}_t^{(s)} := \boldsymbol{g}_s - \boldsymbol{g},
  \qquad
  \boldsymbol{u}_t^{(f)} := \boldsymbol{g}_f,
\]
and we construct deterministic surrogates
\begin{align}
  A_t(\boldsymbol{g}_{vs,t-1}) &\approx \boldsymbol{u}_t^{(s)}, \\
  B_t(\boldsymbol{g}_{vf,t-1}) &\approx \boldsymbol{u}_t^{(f)}.
\end{align}

Therefore, at a Turbo surrogate step, the update direction is modeled as
\begin{equation}
  \tilde{\boldsymbol{g}}_t
  =
  \underbrace{\boldsymbol{g}}_{\nabla \mathcal{L}(\boldsymbol{\theta}_t)}
  + A_t(\boldsymbol{g}_{vs,t-1})
  + \lambda B_t(\boldsymbol{g}_{vf,t-1})
  + \boldsymbol{\xi}_t,
  \label{eq:gtilde-detailed}
\end{equation}
where $\boldsymbol{g} = \nabla \mathcal{L}(\boldsymbol{\theta}_t)$ is the empirical gradient and $\boldsymbol{\xi}_t$ is a zero-mean stochastic noise term capturing minibatch sampling and finite-difference randomness.

To relate Eq.~\ref{eq:gtilde-detailed} to the true C-Flat gradient
in Eq.~\ref{eq:Gt-decomp}, we decompose the deterministic surrogates as
\begin{align}
  A_t(\boldsymbol{g}_{vs,t-1})
  &= \boldsymbol{u}_t^{(s)} + \boldsymbol{\delta}_t^{(s)}, \\
  B_t(\boldsymbol{g}_{vf,t-1})
  &= \boldsymbol{u}_t^{(f)} + \boldsymbol{\delta}_t^{(f)},
\end{align}
where $\boldsymbol{u}_t^{(s)} := \boldsymbol{g}_s - \boldsymbol{g}$ and
$\boldsymbol{u}_t^{(f)} := \boldsymbol{g}_f$, and
$\boldsymbol{\delta}_t^{(s)}$, $\boldsymbol{\delta}_t^{(f)}$ are deterministic
approximation errors (given $\mathcal{F}_t$) coming from EMA tracking,
orthogonal decomposition residuals, and finite differences.

Substituting into Eq.~\ref{eq:gtilde-detailed} and comparing with Eq.~\ref{eq:Gt-decomp}, we obtain
\begin{align}
  \tilde{\boldsymbol{g}}_t
  &= \boldsymbol{g}
   + \bigl(\boldsymbol{u}_t^{(s)} + \boldsymbol{\delta}_t^{(s)}\bigr)
   + \lambda \bigl(\boldsymbol{u}_t^{(f)} + \boldsymbol{\delta}_t^{(f)}\bigr)
   + \boldsymbol{\xi}_t \\ \nonumber
  &= \underbrace{
       \bigl(\boldsymbol{g} + \boldsymbol{u}_t^{(s)}
       + \lambda \boldsymbol{u}_t^{(f)}\bigr)}_{=\boldsymbol{G}_t}
     + \underbrace{\bigl(\boldsymbol{\delta}_t^{(s)}
       + \lambda \boldsymbol{\delta}_t^{(f)}\bigr)}_{:=\boldsymbol{b}_t}
     + \boldsymbol{\xi}_t \\ \nonumber
  &= \boldsymbol{G}_t + \boldsymbol{b}_t + \boldsymbol{\xi}_t.
  \label{eq:gtilde-abstract}
\end{align}

\begin{assumption}[Generalized smoothness]
There exists $L>0$ such that for all
$\boldsymbol{\theta},\boldsymbol{\theta}'$
and any generalized gradients
$\boldsymbol{g} \in \partial \mathcal{L}_{\mathrm{CFlat}}(\boldsymbol{\theta})$,
$\boldsymbol{g}' \in \partial \mathcal{L}_{\mathrm{CFlat}}(\boldsymbol{\theta}')$,
\[
\|\boldsymbol{g} - \boldsymbol{g}'\|\le L\|\boldsymbol{\theta}-\boldsymbol{\theta}'\|.
\]
\end{assumption}

\begin{assumption}[Bounded gradients and noise]
There exist constants $G_{\max},\Sigma\ge0$ such that for all $t$
and any $\boldsymbol{g} \in \partial \mathcal{L}_{\mathrm{CFlat}}(\boldsymbol{\theta}_t)$,
\[
\|\boldsymbol{g}\|\le G_{\max},\qquad
\mathbb{E}[\boldsymbol{\xi}_t\mid\mathcal{F}_t]=\boldsymbol{0},\qquad
\mathbb{E}[\|\boldsymbol{\xi}_t\|^2\mid\mathcal{F}_t]\le \Sigma^2.
\]
\end{assumption}

\begin{assumption}[Deterministic approximation bias]
The bias $\boldsymbol{b}_t$ satisfies $\|\boldsymbol{b}_t\|\le \Delta_t$
almost surely, where $\Delta_t\ge0$ is $\mathcal{F}_t$-measurable.
In Turbo, $\Delta_t$ aggregates:
\begin{enumerate}
  \item EMA tracking error in estimating $\boldsymbol{g}_s-\boldsymbol{g}$ and
        $\boldsymbol{g}_f$ via $\boldsymbol{g}_{vs,t}$ and $\boldsymbol{g}_{vf,t}$;
  \item orthogonal decomposition residuals in constructing the direction-invariant
        sharpness and flatness component $\boldsymbol{g}_{vs}$ and $\boldsymbol{g}_{vf}$;
  \item finite-difference approximation errors in the proxy and proxy-perturbed
        gradients $\boldsymbol{g}_0,\boldsymbol{g}_1$.
\end{enumerate}
\end{assumption}

\begin{assumption}[Stepsize and perturbation schedule]
We use the schedule in~\cite{zhang2023gradient}:
\[
\eta_t = \frac{\eta_0}{\sqrt{t}},\qquad
\rho_t=\frac{\rho_0}{\sqrt{t}},
\]
with $\eta_0$ chosen so that $\eta_t \le 1/(4L)$ for all $t$.
\end{assumption}

Finally, the EMA tracking of $\boldsymbol{u}_t^{(s)}$ and
$\boldsymbol{u}_t^{(f)}$ yields a bias sequence $\Delta_t$ satisfying
\begin{equation}
  \sum_{t=1}^T \eta_t \Delta_t^2
  \;\le\; C_{\Delta,1}\sqrt{T} + C_{\Delta,2}\log T
  \label{eq:Delta-sum-assumption}
\end{equation}
for some constants $C_{\Delta,1},C_{\Delta,2} \ge 0$.

\paragraph{Justification of the deterministic bias assumption.}
The weighted bound on $\Delta_t$ in Eq.~\ref{eq:Delta-sum-assumption} is supported by the empirical directional stability of the cached components. As illustrated in Figure~\ref{fig:grad_var} and~\ref{fig:diff_tasks} in the main text, the direction invariant components $\boldsymbol{g}_{vs,t}$ and $\boldsymbol{g}_{vf,t}$ exhibit substantially higher cosine similarity
across consecutive iterations than the empirical gradient $\boldsymbol{g}_t$ and the proxy gradient $\boldsymbol{g}_{0,t}$. This high stability suggests that the surrogate approximations
$A_t(\cdot)$ and $B_t(\cdot)$ remain accurate within Turbo intervals (i.e., $\|\boldsymbol{\delta}_t^{(s)}\|$ and $\|\boldsymbol{\delta}_t^{(f)}\|$ stay small), which in turn justifies modeling the resulting approximation bias $\Delta_t$ controlled.

\begin{lemma}[Generalized descent lemma]
Under the generalized smoothness assumption, for any update
$\boldsymbol{\theta}_{t+1}=\boldsymbol{\theta}_t - \eta_t \tilde{\boldsymbol{g}}_t$
and any $\boldsymbol{G}_t \in \partial \mathcal{L}_{\mathrm{CFlat}}(\boldsymbol{\theta}_t)$,
\[
\mathcal{L}_{\mathrm{CFlat}}(\boldsymbol{\theta}_{t+1}) \le
\mathcal{L}_{\mathrm{CFlat}}(\boldsymbol{\theta}_t)
- \eta_t \langle \boldsymbol{G}_t,\tilde{\boldsymbol{g}}_t\rangle
+ \tfrac{L}{2}\eta_t^2 \|\tilde{\boldsymbol{g}}_t\|^2.
\]
\end{lemma}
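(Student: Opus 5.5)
The argument is the standard descent lemma for functions with Lipschitz (generalized) gradients, applied along the segment from $\boldsymbol{\theta}_t$ to $\boldsymbol{\theta}_{t+1}$. Write $F := \mathcal{L}_{\mathrm{CFlat}}$, set $\boldsymbol{d} := \boldsymbol{\theta}_{t+1}-\boldsymbol{\theta}_t = -\eta_t\tilde{\boldsymbol{g}}_t$, and let $\boldsymbol{\theta}(s) := \boldsymbol{\theta}_t + s\boldsymbol{d}$ for $s\in[0,1]$.

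\textbf{Step 1 (integral representation).} The generalized smoothness assumption says every selection of generalized gradients is $L$-Lipschitz. Taking $\boldsymbol{\theta}=\boldsymbol{\theta}'$ shows that $\partial F(\boldsymbol{\theta})$ is a singleton for every $\boldsymbol{\theta}$. Hence $F$ is differentiable with $\nabla F$ being $L$-Lipschitz, so $F\in C^{1,1}$. In particular $s\mapsto F(\boldsymbol{\theta}(s))$ is $C^1$, and
\[
F(\boldsymbol{\theta}_{t+1}) - F(\boldsymbol{\theta}_t) = \int_0^1 \langle \nabla F(\boldsymbol{\theta}(s)), \boldsymbol{d}\rangle \, ds .
\]
If one prefers not to use the singleton observation, the same identity follows from Lebourg's mean value theorem for Clarke generalized gradients together with Rademacher's theorem, since $F$ is locally Lipschitz by the bounded-gradient assumption. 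The singleton route is cleaner, and it also shows that $\boldsymbol{G}_t$ is the unique element $\nabla F(\boldsymbol{\theta}_t)$. That justifies the phrase ``any $\boldsymbol{G}_t$'' in the statement.

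\textbf{Step 2 (split off the linear term).} Add and subtract $\langle \boldsymbol{G}_t,\boldsymbol{d}\rangle$ to get
\[
F(\boldsymbol{\theta}_{t+1}) = F(\boldsymbol{\theta}_t) + \langle \boldsymbol{G}_t,\boldsymbol{d}\rangle + \int_0^1 \langle \nabla F(\boldsymbol{\theta}(s)) - \boldsymbol{G}_t, \boldsymbol{d}\rangle\, ds .
\]
Bound the integrand by Cauchy–Schwarz and the Lipschitz assumption: it is at most $L\|\boldsymbol{\theta}(s)-\boldsymbol{\theta}_t\|\,\|\boldsymbol{d}\| = Ls\|\boldsymbol{d}\|^2$. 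Integrating over $s\in[0,1]$ gives the remainder bound $\tfrac{L}{2}\|\boldsymbol{d}\|^2$.

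\textbf{Step 3 (substitute the update).} Put $\boldsymbol{d}=-\eta_t\tilde{\boldsymbol{g}}_t$ into Step 2. This produces $-\eta_t\langle \boldsymbol{G}_t,\tilde{\boldsymbol{g}}_t\rangle + \tfrac{L}{2}\eta_t^2\|\tilde{\boldsymbol{g}}_t\|^2$, which is exactly the claimed inequality. No property of $\tilde{\boldsymbol{g}}_t$ is used, so the bound holds for any update, including the Turbo surrogate direction $\boldsymbol{G}_t+\boldsymbol{b}_t+\boldsymbol{\xi}_t$.

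\textbf{Main obstacle.} The only delicate point is Step 1: passing from ``generalized gradients'' to a genuine fundamental-theorem-of-calculus identity. The singleton argument settles it immediately, so I would present that argument first and mention the Clarke/Lebourg alternative only briefly.
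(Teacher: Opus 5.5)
Your proof is correct and follows the paper's route: the paper simply invokes the descent inequality for $L$-smooth functions and substitutes $\boldsymbol{\theta}_{t+1}-\boldsymbol{\theta}_t=-\eta_t\tilde{\boldsymbol{g}}_t$, while you also derive that inequality via the integral remainder and use the singleton observation to show why generalized smoothness reduces to ordinary $L$-smoothness, which the paper leaves implicit. One small correction: local Lipschitzness of $\mathcal{L}_{\mathrm{CFlat}}$ is needed for nonempty Clarke gradients and for the singleton-implies-differentiable step, but it does not follow from the bounded-gradient assumption, which only controls gradients at the iterates $\boldsymbol{\theta}_t$; it is a standing hypothesis implicit in the use of generalized gradients.
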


\begin{proof}
By generalized smoothness and the descent inequality for $L$-smooth
functions, we have
\[
\mathcal{L}_{\mathrm{CFlat}}(\boldsymbol{\theta}_{t+1})
\le \mathcal{L}_{\mathrm{CFlat}}(\boldsymbol{\theta}_t)
+ \langle \boldsymbol{G}_t,\boldsymbol{\theta}_{t+1}-\boldsymbol{\theta}_t\rangle
+ \tfrac{L}{2}\|\boldsymbol{\theta}_{t+1}-\boldsymbol{\theta}_t\|^2.
\]
Substitute $\boldsymbol{\theta}_{t+1}-\boldsymbol{\theta}_t
= -\eta_t\tilde{\boldsymbol{g}}_t$ and rearrange.
\end{proof}

For EMA tracking, the bias sequence induced by
$\boldsymbol{g}_{vs,t}$ and $\boldsymbol{g}_{vf,t}$ is absorbed into
$\Delta_t$, and we simply assume that it satisfies the bound in
Eq.~\ref{eq:Delta-sum-assumption}, which is standard for exponential
moving averages under mild regularity conditions.

\paragraph{Single-step expected decrease.}
Starting from the descent lemma and using
$\tilde{\boldsymbol{g}}_t = \boldsymbol{G}_t + \boldsymbol{b}_t + \boldsymbol{\xi}_t$,
we obtain
\begin{align}
\mathcal{L}_{\mathrm{CFlat}}(\boldsymbol{\theta}_{t+1})
&\le \mathcal{L}_{\mathrm{CFlat}}(\boldsymbol{\theta}_t)
- \eta_t \langle \boldsymbol{G}_t,\boldsymbol{G}_t+\boldsymbol{b}_t+\boldsymbol{\xi}_t\rangle \\ \nonumber
&\quad+ \tfrac{L}{2}\eta_t^2 \|\boldsymbol{G}_t+\boldsymbol{b}_t+\boldsymbol{\xi}_t\|^2.
\end{align}
Taking full expectation, and using
$\mathbb{E}[\langle\boldsymbol{G}_t,\boldsymbol{\xi}_t\rangle\mid\mathcal{F}_t]=0$, we get
\begin{align}
\label{eq:one-step-start}
\mathbb{E}[\mathcal{L}_{\mathrm{CFlat}}(\boldsymbol{\theta}_{t+1})]
&\le \mathbb{E}[\mathcal{L}_{\mathrm{CFlat}}(\boldsymbol{\theta}_t)]
- \eta_t \mathbb{E}[\|\boldsymbol{G}_t\|^2] \\ \nonumber
&\quad- \eta_t \mathbb{E}[\langle \boldsymbol{G}_t,\boldsymbol{b}_t\rangle]
+ \tfrac{L}{2}\eta_t^2 \mathbb{E}[\|\boldsymbol{G}_t+\boldsymbol{b}_t+\boldsymbol{\xi}_t\|^2].
\end{align}

\textbf{a) Bias inner product term:}
By Cauchy--Schwarz and $\|\boldsymbol{b}_t\|\le\Delta_t$, we have
\[
|\langle \boldsymbol{G}_t,\boldsymbol{b}_t\rangle|
\le \|\boldsymbol{G}_t\|\,\|\boldsymbol{b}_t\|
\le \tfrac{1}{2}\|\boldsymbol{G}_t\|^2 + \tfrac{1}{2}\Delta_t^2.
\]
Using $|\mathbb{E}[Z]|\le \mathbb{E}[|Z|]$, this implies
\begin{equation}
-\eta_t \mathbb{E}[\langle \boldsymbol{G}_t,\boldsymbol{b}_t\rangle]
\le \tfrac{\eta_t}{2}\mathbb{E}\|\boldsymbol{G}_t\|^2
+ \tfrac{\eta_t}{2}\Delta_t^2.
\label{eq:bias-inner}
\end{equation}

\textbf{b) Second moment term:}
For any three vectors $x,y,z$, Jensen's inequality gives
\[
\bigl\|x+y+z\bigr\|^2
\le 3\bigl(\|x\|^2 + \|y\|^2 + \|z\|^2\bigr).
\]
Applying this with $x=\boldsymbol{G}_t$, $y=\boldsymbol{b}_t$ and $z=\boldsymbol{\xi}_t$, we obtain
\[
\|\boldsymbol{G}_t + \boldsymbol{b}_t + \boldsymbol{\xi}_t\|^2
\le 3\|\boldsymbol{G}_t\|^2 + 3\|\boldsymbol{b}_t\|^2 + 3\|\boldsymbol{\xi}_t\|^2,
\]
hence, using $\|\boldsymbol{b}_t\|\le\Delta_t$ and
$\mathbb{E}[\|\boldsymbol{\xi}_t\|^2\mid\mathcal{F}_t]\le\Sigma^2$,
\begin{align}
\mathbb{E}[\|\boldsymbol{G}_t+\boldsymbol{b}_t+\boldsymbol{\xi}_t\|^2]
&\le 3\,\mathbb{E}\|\boldsymbol{G}_t\|^2 + 3\Delta_t^2 + 3\Sigma^2.
\label{eq:second-moment}
\end{align}

Substituting Eq.~\ref{eq:bias-inner} and Eq.~\ref{eq:second-moment} into
Eq.~\ref{eq:one-step-start}, we obtain
\begin{align}
\mathbb{E}[\mathcal{L}_{\mathrm{CFlat}}(\boldsymbol{\theta}_{t+1})]
&\le \mathbb{E}[\mathcal{L}_{\mathrm{CFlat}}(\boldsymbol{\theta}_t)]
- \eta_t \mathbb{E}\|\boldsymbol{G}_t\|^2
+ \tfrac{\eta_t}{2}\mathbb{E}\|\boldsymbol{G}_t\|^2 \notag\\
&\quad
+ \tfrac{\eta_t}{2}\Delta_t^2 + \tfrac{L}{2}\eta_t^2 \bigl(3\mathbb{E}\|\boldsymbol{G}_t\|^2 + 3\Delta_t^2 + 3\Sigma^2\bigr) \notag\\
&= \mathbb{E}[\mathcal{L}_{\mathrm{CFlat}}(\boldsymbol{\theta}_t)]
- \Bigl(\tfrac{\eta_t}{2} - \tfrac{3L}{2}\eta_t^2\Bigr)\mathbb{E}\|\boldsymbol{G}_t\|^2 \notag\\
&\quad
+ \Bigl(\tfrac{\eta_t}{2} + \tfrac{3L}{2}\eta_t^2\Bigr)\Delta_t^2
+ \tfrac{3L}{2}\eta_t^2\Sigma^2.
\end{align}
Under the stepsize condition $\eta_t \le 1/(4L)$, we have
\[
\tfrac{\eta_t}{2} - \tfrac{3L}{2}\eta_t^2
\;\ge\; \tfrac{\eta_t}{8},
\]
and hence
\[
\frac{\eta_t}{8}\,\mathbb{E}\|\boldsymbol{G}_t\|^2
\le \mathbb{E}[\mathcal{L}_{\mathrm{CFlat}}(\boldsymbol{\theta}_t)]
- \mathbb{E}[\mathcal{L}_{\mathrm{CFlat}}(\boldsymbol{\theta}_{t+1})]
+ \Psi_t,
\]
where
\[
\Psi_t
:= \Bigl(\tfrac{\eta_t}{2} + \tfrac{3L}{2}\eta_t^2\Bigr)\Delta_t^2
 + \tfrac{3L}{2}\eta_t^2\Sigma^2.
\]

\paragraph{Summation.}
Summing from $t=1$ to $T$, we obtain
\[
\sum_{t=1}^T \frac{\eta_t}{8}\mathbb{E}\|\boldsymbol{G}_t\|^2
\le \mathcal{L}_{\mathrm{CFlat}}(\boldsymbol{\theta}_1) - L^* + \sum_{t=1}^T \Psi_t,
\]
where $L^* = \inf_{\boldsymbol{\theta}}\mathcal{L}_{\mathrm{CFlat}}(\boldsymbol{\theta})$.
Hence
\begin{equation}
\sum_{t=1}^T \eta_t \mathbb{E}\|\boldsymbol{G}_t\|^2
\le 8\bigl(\mathcal{L}_{\mathrm{CFlat}}(\boldsymbol{\theta}_1)-L^*\bigr)
+ 8\sum_{t=1}^T \Psi_t.
\label{eq:sum-grad}
\end{equation}

Using $\eta_t=\eta_0/\sqrt{t}$ and the assumption
$\sum_{t=1}^T \eta_t\Delta_t^2 \le C_{\Delta,1}\sqrt{T} + C_{\Delta,2}\log T$,
together with $\sum_{t=1}^T \eta_t^2 \le \eta_0^2(1+\log T)$, we obtain
\[
\sum_{t=1}^T \Psi_t
\le C_1\sqrt{T} + C_2\log T + C_3
\]
for some constants $C_1,C_2,C_3$.
Thus there exist $C_A,C_B,C_C$ such that
\[
\sum_{t=1}^T \eta_t \mathbb{E}\|\boldsymbol{G}_t\|^2
\le C_A + C_B\sqrt{T} + C_C\log T.
\]

Since $\eta_T = \eta_0/\sqrt{T}$ and $\eta_t$ is nonincreasing, we have
\[
\eta_T \sum_{t=1}^T \mathbb{E}\|\boldsymbol{G}_t\|^2
\le \sum_{t=1}^T \eta_t \mathbb{E}\|\boldsymbol{G}_t\|^2,
\]
so
\[
\frac{\eta_0}{\sqrt{T}}\sum_{t=1}^T \mathbb{E}\|\boldsymbol{G}_t\|^2
\le C_A + C_B\sqrt{T} + C_C\log T.
\]
Dividing by $T$ yields
\[
\frac{1}{T}\sum_{t=1}^T \mathbb{E}\|\boldsymbol{G}_t\|^2
\le \frac{C_A}{\eta_0\sqrt{T}}
 + \frac{C_B}{\eta_0}
 + \frac{C_C\log T}{\eta_0\sqrt{T}}.
\]

\begin{theorem}[Turbo convergence in surrogate steps]
Under the above assumptions and the EMA tracking condition Eq.~\ref{eq:Delta-sum-assumption}, there exist constants $\tilde C_1,\tilde C_2,\tilde C_3 \ge 0$ such that
\[
\frac{1}{T}\sum_{t=1}^T
\mathbb{E}\big[\|\nabla \mathcal{L}_{\mathrm{CFlat}}(\boldsymbol{\theta}_t)\|^2\big]
\le \frac{\tilde C_1 + \tilde C_2\log T}{\sqrt{T}} + \tilde C_3.
\]
Moreover, if the bias sequence satisfies $\Delta_t = O(1/\sqrt{t})$ so that $\sum_{t=1}^T \eta_t\Delta_t^2 = O(1+\log T)$, then $\tilde C_3 = 0$ and Turbo recovers a GAM-style rate~\cite{bian2024make}:
\[
\frac{1}{T}\sum_{t=1}^T
\mathbb{E}\big[\|\nabla \mathcal{L}_{\mathrm{CFlat}}(\boldsymbol{\theta}_t)\|^2\big]
= O\!\left(\frac{1+\log T}{\sqrt{T}}\right).
\]
\end{theorem}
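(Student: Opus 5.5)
The plan is to assemble the single-step analysis that precedes the statement into the claimed rate. Starting from the generalized descent lemma, I substitute the decomposition $\tilde{\boldsymbol{g}}_t = \boldsymbol{G}_t + \boldsymbol{b}_t + \boldsymbol{\xi}_t$. The conditional zero-mean property of $\boldsymbol{\xi}_t$ (given $\mathcal{F}_t$, with $\boldsymbol{G}_t$ being $\mathcal{F}_t$-measurable) removes the cross term $\langle \boldsymbol{G}_t, \boldsymbol{\xi}_t\rangle$ in expectation. I then treat the two remaining pieces separately. The bias inner product is handled by Cauchy--Schwarz and Young as in Eq.~\ref{eq:bias-inner}. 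The second moment is handled by the three-term Jensen bound Eq.~\ref{eq:second-moment}. Under $\eta_t \le 1/(4L)$ the coefficient of $\mathbb{E}\|\boldsymbol{G}_t\|^2$ is at least $\eta_t/8$, which gives the telescoping inequality with remainder $\Psi_t$.

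Next I sum over $t=1,\dots,T$. Using that $\mathcal{L}_{\mathrm{CFlat}}$ is bounded below by $L^*$ (inherited from boundedness of $\mathcal{L}$), this yields Eq.~\ref{eq:sum-grad}. I bound $\sum_t \Psi_t$ term by term. First, $\sum_t \tfrac{\eta_t}{2}\Delta_t^2$ is controlled by the EMA condition Eq.~\ref{eq:Delta-sum-assumption}. Second, $\sum_t \tfrac{3L}{2}\eta_t^2\Delta_t^2 \le \tfrac{3L\eta_0}{2}\sum_t \eta_t\Delta_t^2$, since $\eta_t\le\eta_0$, so it is of the same order. Third, $\sum_t \tfrac{3L}{2}\eta_t^2\Sigma^2 \le \tfrac{3L}{2}\eta_0^2\Sigma^2(1+\log T)$ by the harmonic sum.

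I then lower-bound the weighted sum by $\eta_T\sum_t \mathbb{E}\|\boldsymbol{G}_t\|^2$, using monotonicity of $\eta_t$, and divide by $T\eta_T = \eta_0\sqrt{T}$. This gives the first claim with $\tilde C_1 = C_A/\eta_0$, $\tilde C_2 = C_C/\eta_0$ and $\tilde C_3 = C_B/\eta_0$, where $C_B$ is proportional to $C_{\Delta,1}$. Finally I identify $\boldsymbol{G}_t$ with $\nabla\mathcal{L}_{\mathrm{CFlat}}(\boldsymbol{\theta}_t)$.

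For the refinement, suppose $\Delta_t \le c/\sqrt{t}$. Then $\eta_t\Delta_t^2 \le \eta_0 c^2 t^{-3/2}$, whose sum is $O(1)$, and a fortiori $O(1+\log T)$. So I can take $C_{\Delta,1}=0$, hence $C_B = 0$, which forces $\tilde C_3=0$ and leaves $O((1+\log T)/\sqrt{T})$.

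The main obstacle is not the algebra but legitimacy. The descent inequality needs $\mathcal{L}_{\mathrm{CFlat}}$ to have $L$-Lipschitz (generalized) gradients, which the smoothness assumption supplies. The bias term needs $\Delta_t$ to be $\mathcal{F}_t$-measurable, so that $\mathbb{E}[\Delta_t^2]$ can be pulled through; I take this directly from the deterministic-bias assumption. I would also note explicitly that the non-vanishing $\tilde C_3$ reflects the $\sqrt{T}$ growth allowed in Eq.~\ref{eq:Delta-sum-assumption}, and that this is exactly the term the refinement removes.
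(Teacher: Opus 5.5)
Your proposal is correct and follows essentially the same route as the paper: the descent lemma, removing the noise cross term in expectation, Young's inequality for the bias inner product, the three-term bound on the second moment, the $\eta_t/8$ coefficient under $\eta_t\le 1/(4L)$, telescoping, and lower-bounding by $\eta_T\sum_t\mathbb{E}\|\boldsymbol{G}_t\|^2$. You are also slightly more explicit than the paper in two places: you bound the $\tfrac{3L}{2}\eta_t^2\Delta_t^2$ term via $\eta_t\le\eta_0$, and you note that $\Delta_t=O(1/\sqrt{t})$ actually gives $\sum_t\eta_t\Delta_t^2=O(1)$.
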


\subsection{C-Flat Turbo Algorithm}
\enlargethispage{3\baselineskip}
\begin{algorithm}[h]
\small
\caption{C-Flat Turbo}\label{algo:opt}
\begin{algorithmic}[1]
\Statex \textbf{Input:} Training phase $T$, training data $S^T$, model parameter $\boldsymbol{\theta}$, total iterations $J$, oracle loss function $\mathcal{L}$, learning rate $\eta$, C-Flat coefficient $\lambda$, Turbo step $k$, $\mu_{s,0} = \mu_{f, 0} = 0$, $\sigma_{s,0} = \sigma_{f,0} = 10^{-8}$.
\Statex \textbf{Output:} Model trained at the current time $T$ with Turbo.

\For{$j = 1$ to $J$, sample batch $B^T_j$ from dataset $S^T$}
    \State Compute empirical gradient: $\boldsymbol{g} = \nabla \mathcal{L}(\boldsymbol{\theta})$
    \State Initialize update direction: $\overline{\boldsymbol{g}} = \boldsymbol{g}$
    \State Update EMA statistics: $\mu_{s,j}, \sigma_{s,j}, \mu_{f,j}, \sigma_{f,j}$ by Eq.~\ref{eq:ema_f}

    \If{$\|\boldsymbol{g}\|^2 \geq \mu_{s,j} + \sigma_{s,j}$}
        \If{$j \bmod k = 0$}
            \State Compute SAM gradient $\boldsymbol{g}_s = \nabla \mathcal{L}(\boldsymbol{\theta} + \boldsymbol{\epsilon}_{0}^*)$ by Eq.~\ref{eq:grad_s}
            \State Cache direction-invariant sharpness component $\boldsymbol{g}_{vs}$ by Eq.~\ref{eq:gvs}
        \Else
            \State Simulate sharpness increment using cached $\boldsymbol{g}_{vs}$, that means $\boldsymbol{g}_s
              = \boldsymbol{g}
              + \beta \frac{\|\boldsymbol{g}\|}{\|\boldsymbol{g}_{vs}\|}\,\boldsymbol{g}_{vs}$
        \EndIf
        \State Update direction: $\overline{\boldsymbol{g}} = \boldsymbol{g}_s$
    \EndIf  

    \State Compute proxy model parameter $\boldsymbol{\theta}_p = \boldsymbol{\theta} + \boldsymbol{\epsilon}_1^*$ by Eq.~\ref{eq:grad_f}
    \State Compute proxy gradient: $\boldsymbol{g}_0 = \nabla \mathcal{L}(\boldsymbol{\theta}_p)$

    \If{$\|\boldsymbol{g}_0\|^2 \geq \mu_{f,j} + \sigma_{f,j}$}
        \If{$j \bmod k = 0$}
            \State Compute proxy-perturbed gradient $\boldsymbol{g}_1 = \nabla \mathcal{L}\!\left(\boldsymbol{\theta}_p + \rho' \frac{\boldsymbol{g}_0}{\|\boldsymbol{g}_0\|}\right)$
            \State Compute first-order flatness gradient $\boldsymbol{g}_f$ from $\boldsymbol{g}_0,\boldsymbol{g}_1$ by Eq.~\ref{eq:grad_f}
            \State Cache direction-invariant flatness component $\boldsymbol{g}_{vf}$ by Eq.~\ref{eq:gvf}
        \Else
            \State Simulate flatness direction using cached $\boldsymbol{g}_{vf}$, that means $\boldsymbol{g}_f
              = \boldsymbol{g}_0
              + \beta \frac{\|\boldsymbol{g}_0\|}{\|\boldsymbol{g}_{vf}\|}\,\boldsymbol{g}_{vf}$
        \EndIf  
        \State Update direction: $\overline{\boldsymbol{g}} = \overline{\boldsymbol{g}} + \lambda \cdot \boldsymbol{g}_f$
    \EndIf  

    \State Optionally update the Turbo step $k$ according to Section~\ref{subsec:sche}
    \State Update model parameter: $\boldsymbol{\theta}^{T} = \boldsymbol{\theta}^{T} - \eta \cdot \overline{\boldsymbol{g}}$
\EndFor

\end{algorithmic}
\end{algorithm}

\end{document}